\pgfplotsset{compat=1.7}
\newcommand{\rlo}{$r$\nobreakdash-\textsc{LeadingOnes}\xspace}
\newcommand{\rcga}{$r$\nobreakdash-cGA\xspace}
\newcommand{\lo}{\textsc{LeadingOnes}\xspace}
\newcommand{\cga}{cGA\xspace}
\newcommand{\ronemax}{$r$\nobreakdash-OneMax\xspace}
\newcommand{\onemax}{OneMax\xspace}
\newcommand{\ie}{i.\,e.\xspace}
\newcommand{\expect}[1]{\textup{E}[#1]}
\newcommand{\prob}[1]{\textup{P}[#1]}
\newcommand{\indic}[1]{\mathds{1}\{#1\}}
\newenvironment{proofof}[1]{\noindent \textbf{Proof of #1:}}{$\Box$}
\begin{document}

\title{A Runtime Analysis of the Multi-Valued Compact Genetic Algorithm on Generalized \lo} 
\titlerunning{A Runtime Analysis of the Multi-Valued \cga on Generalized \lo}
%
\author{Sumit Adak \and
Carsten Witt}
\authorrunning{S. Adak and C. Witt}
%
\institute{DTU Compute, Technical University of Denmark\\ Kgs. Lyngby, Denmark\\
\email{\{suad,cawi\}@dtu.dk}}
\maketitle              
\begin{abstract}

In the literature on runtime analyses of estimation of distribution algorithms (EDAs), researchers have recently explored univariate EDAs for multi-valued decision variables. Particularly, Jedidia et al.\ gave the first runtime analysis of the multi-valued UMDA on the $r$-valued \lo (\rlo) functions and Adak et al.\ gave the first runtime analysis of the multi-valued \cga (\rcga) on the $r$-valued \onemax function. We utilize their framework to conduct an analysis of the multi-valued \cga on the $r$-valued \lo function. Even for the binary case, a runtime analysis of the classical \cga on \lo was not yet available. In this work, we show that the runtime of the \rcga on \rlo is $\bigo(n^2 r^2\log^3 n\log^2 r)$ with high probability.

\keywords{Estimation of distribution algorithms  \and multi-valued compact genetic algorithm \and genetic drift \and \lo.}
\end{abstract}
\section{Introduction}
\label{section:introduction}

An optimization technique known as \emph{estimation of distribution algorithms} (EDAs) builds a probabilistic model that is subsequently used to generate new search points based on earlier searches. Three main phases are involved when creating an EDA; by using the existing probabilistic model, a population of individuals is first sampled; next, the population's fitness is ascertained; and then, a new probabilistic model is generated based on the population's fitness. The main difference between them and evolutionary algorithms (EAs) is that the latter evolve a population, whilst the former evolve a probabilistic model. A number of studies \cite{benbaki2021rigorous,doerr2021runtime,witt2023majority} have shown that EDAs can outperform EAs. 

Different probabilistic models and update techniques give rise to distinct algorithms within EDAs. Each approaches offers unique advantages and challenges, making it suitable for different types of optimization problems. According to the strength of the underlying probabilistic model, EDAs can be categorized as \emph{univariate} and \emph{multivariate} algorithms. Only one variable is used in the model of each problem variable by univariate algorithms, and, on the other hand, multivariate algorithms employ multiple variables to model a problem variable. A more detailed classification of EDAs is provided by Pelikan et al.\ \cite{pelikan2015estimation}. A couple of popular examples of multivariate EDAs include the \emph{factorized distribution algorithm} (FDA)~\cite{Muhlenbein1999}, the \emph{extended compact genetic algorithm} (ecGA)~\cite{harik2006linkage}, the \emph{mutual-information-maximization input clustering} (MIMIC)~\cite{de1996mimic}, the \emph{bivariate marginal distribution algorithm} (BMDA)~\cite{pelikan1999bivariate}, and the \emph{Bayesian optimization algorithm} (BOA)~\cite{Pelikan-BOA}. Univariate EDAs examples include the \emph{univariate marginal distribution algorithm} (UMDA) \cite{muhlenbein1996recombination}, the \emph{population-based incremental learning} (PBIL), \cite{baluja1994population}, the \emph{compact genetic algorithm} (\cga) \cite{harik1999compact}, and many more. This manuscript is devoted to the theoretical investigations of univariate EDAs, particularly to the multi-valued compact genetic algorithm introduced in \cite{BENJEDIDIA2024114622}.

\onemax \cite{Muhlenbein92} and \lo \cite{rudolph1997convergence} are the pseudo-Boolean functions for EDAs that are most frequently examined theoretically. Other functions have been examined as well, the probably most well-known of which is \textsc{BinVal} \cite{Muhlenbein1999}. Traditional evolutionary algorithms are frequently utilized for diverse search spaces, while EDAs are typically applied for problems that involve binary decision variables. Moreover, researchers made the first moves toward utilizing EDAs in scenarios where decision variables have more than two values \cite{Roberto2008,BENJEDIDIA2024114622,AdakPPSN2024}. Specially, Jedidia et al.\ \cite{BENJEDIDIA2024114622} and Adak et al.\ \cite{AdakPPSN2024} explore univariate EDAs for multi-valued decision variables. By adding $r$ probability values for each variable, they address a multi-valued problem. This article addresses multi-valued \lo.

In the literature on runtime analyses of EDAs, there are several analyses of the UMDA on \lo and variants \cite{Chen2009,Chen2010}. But a runtime analysis of even the simple binary \cga on \lo was to date missing. Hence, one of our goals is to provide a first runtime analysis of the \cga on \lo.
In fact, in this article, we address a more general case by providing the first runtime analysis of the multi-valued \cga (\rcga) on the $r$-valued \lo (\rlo) function, giving additional insight on the performance of multi-valued EDAs.
Specifically, we bound the runtime of the \rcga on the \rlo problem by $\bigo(n^2 r^2\log^3 n\log^2 r)$ (Theorem~\ref{theorem:rcgacomplexity}) with high probability (defined 
as probability $1-o(1)$). For $r=\bigo(1)$, this 
bound is close to the typical runtime around $\Theta(n^2)$ that simple EAs \cite{DROSTE200251} and other EDAs with 
optimal parameter settings \cite{DOERR2021121} have on this problem.

The manuscript is organized as follows: Section~\ref{section:background} defines the multi-valued \lo function and summarizes the earlier work on our technical domains. Section~\ref{section:cGA} elaborates on the multi-valued EDA framework for the multi-valued \cga. The main technical results including genetic drift analysis and the runtime execution of the \rcga on $r$-valued \lo function are presented in Section~\ref{section:drift} and~\ref{section:runtime}. For the \emph{hypothetical population size} $K$, which is the key parameter of the \rcga, the experiments in Section~\ref{section:experiments} present the empirical runtime across the parameter. Finally, the manuscript concludes with a brief summary. 

\section{Background}
\label{section:background}
\subsection{Preliminaries}
\label{subsection:preliminaries}

We focus on the maximization of functions of the form $f:\{0,1,\dots,r-1\}^{n}\rightarrow \mathbb{R}$, also called $r$-valued (or multi-valued) fitness functions.
For an individual $x \in \{0,1,\dots,r-1\}^{n}$, we call $f(x)$ the \emph{fitness} of $x$.

Let $n\in \mathbb{N}_{\geq 1}$ and $r\in \mathbb{N}_{\geq 2}$. We state the definition of \rlo as already defined in \cite{BENJEDIDIA2024114622}. For all $x = (x_1, \dots, x_n) \in \{0,1,\dots,r-1\}^{n}$, we have 
\begin{align*}
r\text{-}\lo(x) \coloneqq &  \sum_{i=1}^{n} \prod^{i}_{j=1}  \indic {x_{j} = 0}
\end{align*}
and the function returns the number of consecutive~$0$s starting from the leftmost position. Note that the unique maximum is the all-$0$s string in \rlo function. However, a more general version can be defined by choosing an arbitrary optimum $a\in \{0,\dots,r-1\}^{n}$, and defining, for all $b\in \{0,\dots,r-1\}^{n}$, $r$-$\lo_{a,\sigma}(b) =  \sum_{i=1}^{n} \prod^{i}_{j=1} \indic{b_\sigma(j) = a_\sigma(j)}$, where $\sigma$ is a permutation of $\{1,\dots,n\}$ \cite{afshani2013query}. Note that, for \rlo, the maximum fitness value is $n$.

\subsection{Related Work}
\label{subsection:RW}

In this work, we concentrate on the runtime evaluation of $r$-valued compact genetic algorithm (\rcga) on the multi-valued \lo function. 
There are many theoretical articles starting from classical evolutionary algorithms to EDAs.
EDAs are commonly utilized to address a wide range of complex problems, as highlighted in recent studies~\cite{droste2006rigorous,sudholt2019choice}. Droste presented the first runtime analysis of the \cga on linear pseudo-Boolean functions~\cite{droste2006rigorous}. It was also shown that the expected runtime for any function has a lower bound of $\Omega(K\sqrt{n})$ and an upper bound of $O(Kn)$ for every linear function. Furthermore, it was noted that the difference in runtime between two linear functions suggests that EDAs optimize problems within this class in distinct ways. Most theoretical research on EDAs has focused on pseudo-Boolean optimization~\cite{Krejca2020}. Among the most commonly studied pseudo-Boolean functions for EDAs are \onemax and \lo~~\cite{Motwani_Raghavan_1995,Muhlenbein92}. In addition to these, \textsc{BinVal} is another widely recognized function that has been explored~\cite{Chen2009}, although other functions have also been analyzed~\cite{droste2006rigorous,Muhlenbein1999}. A framework for EDAs for optimizing problems with more than two choice variables from the multi-valued domain was recently introduced by Jedidia et al.\ \cite{BENJEDIDIA2024114622}. They demonstrate how the multi-valued UMDA effectively solves the $r$-valued \lo problem. Subsequently, Adak et al.\ \cite{AdakPPSN2024} provide the first runtime analysis of a $r$-valued \onemax function using the multi-valued \cga within their framework. Together, their work demonstrates how EDAs can be tailored for multi-valued problems and used to define their parameters. 

The analysis of EDAs for complex problems is a particularly active field of research nowadays~\cite{Krejca2020}. Very recently,  Hamano et al.\ \cite{Ryoki24EC} explored a probabilistic model-based technique with a sample size of two and an underlying distribution derived from the family of categorical distributions, which they termed categorical compact genetic algorithm (ccGA). It turns out that this algorithm is equivalent to the \rcga investigated in this paper. Theoretically, they have investigated the dependency of the number of dimensions, the number of possible categories, and the learning rate on the runtime. In the categorical domain, they have explored the tail bound of the runtime on two linear functions: categorical \onemax (COM), which is equivalent 
to the \ronemax function mentioned earlier, and \textsc{KVal}, an extension of the \textsc{BinVal} function. Furthermore, more information regarding the theory and application of EDAs can be found in~\cite{Krejca2020,larranaga2001estimation,pelikan2015estimation}.

\section{The Multi-valued \cga}
\label{section:cGA}

The compact genetic algorithm (cGA) \cite{harik1999compact} is one of the most popular univariate EDAs. It has only one parameter $K\in\mathbb{R}_{>0}$, which is called hypothetical population size \cite{doerr2021runtime} and it maintains a vector of probabilities (called frequencies). In each iteration of the \cga, it creates two solutions independently. Further, by comparing the fitness values of the two solutions, each frequency is updated (increases or decreases) by $1/K$ in the direction of the better offspring. 

An extended version of \cga is the \rcga where it supports the multi-valued variables instead of binary only~\cite{BENJEDIDIA2024114622}. Algorithm~\ref{algorithm:r-cGA-rOneMax} defines the \rcga. It uses marginal probabilities (denoted as frequencies) $p^{(t)}_{i,j}$ corresponding to the probability at time $t$ of position $i$ and value $j$. Further, in each iteration it creates two solutions $x$ and $y$ independently. After comparing the fitness values of $x$ and~$y$, it updates the frequencies by $\pm 1/K$ in the direction of the better offspring. Note that $K$ indicates the strength of the update of the probabilistic model.

\begin{algorithm}[h]
\caption{$r$-valued Compact Genetic Algorithm ($r$-cGA) for the maximization of $f : \{0,\dots,r-1\}^n \rightarrow \mathbb{R}$}
\label{algorithm:r-cGA-rOneMax}
\Input{$t \gets 0$ \hspace{18em} $p^{(t)}_{i,0} \gets p^{(t)}_{i,1} \gets  p^{(t)}_{i,2} \dots \gets p^{(t)}_{i,r-1} \gets \frac{1}{r}$ where $i\in \{1, 2, \dots, n\}$}
\While{termination criterion not met}{
\For{$i\in \{1, 2, \dots, n\}$ independently}{
$x_{i} \gets j$ with probability $p^{(t)}_{i,j}$ w.r.t. $j=0,\dots,r-1$
\\
$y_{i} \gets j$ with probability $p^{(t)}_{i,j}$ w.r.t. $j=0,\dots,r-1$
\\
}
\If{$f(x) < f(y)$}{ swap $x$ and $y$
}
\For{$i\in \{1, 2, \dots, n\}$}{
\For{$j\in \{0, 1, \dots, r-1\}$}{
$\overline{p}^{(t+1)}_{i,j} \gets p^{(t)}_{i,j} + \frac{1}{K} (\indic{x_{i} = j} - \indic{y_{i}=j})$ \\
$p^{(t+1)}_{i,j}\gets$ restrict $\overline{p}^{(t+1)}_{i,j}$ to be within $[\frac{1}{(r-1)n}, 1-\frac{1}{n}]$ (see \cite{BENJEDIDIA2024114622})}
}
$t\gets t + 1$
}
\end{algorithm}
More precisely, the probabilistic model of the \rcga is defined by an $n\times r$ matrix (the frequency matrix), where each row $i\in\{1,\dots,n\}$ forms a vector $p_i\coloneqq (p^{(t)}_{i,j})_{j\in \{0,\dots,r-1\}}$ (the frequency vector at position $i$). In the frequency matrix, initially each frequency is set to $1/r$, leading to a uniform distribution when sampling the first individuals. We create two individuals $x,y\in \{0,\dots,r-1\}^{n}$. Then, for all $i\in \{1,\dots,n\}$ and all $j \in \{0,\dots,r-1\}$, the probability that $x_i$ and $y_i$ has value $j$ is $p^{(t)}_{i,j}$. By comparing the fitness values of $x$ and $y$, we update the frequency by $1/K$. After updating the frequencies and before restricting them to an interval (see next paragraph), each frequency vector sums to 1 in this model, because exactly one frequency is increased by $1/K$ and exactly one frequency is decreased by this same amount. 

In order to avoid the fixation at 0 or 1, the framework introduced in~\cite{BENJEDIDIA2024114622} restricts 
all frequencies to the interval $[1/((r-1)n), 1-1/n]$; see the paper for details. Note that the restriction procedure may also update frequencies belonging to values in~$\{0,\dots,r-1\}$ that were not sampled in any of the two individuals. The restriction ensures that there is always a positive probability to sample an individual of optimum value; on the negative side, the more complicated update mechanism for $r\ge 3$ rules out the 
so-called well-behaved frequency assumption \cite{sudholt2019choice} that has been useful for the binary \cga. In the rest of the paper, we denote $1/((r-1)n)$ as lower border and $1-1/n$ as upper border.
We are interested in the number of function evaluations that are needed to sample a solution of optimum value. This is proportional to the value of $t$ in the algorithm. Further, this number is referred to as \emph{runtime} or \emph{optimization time}.

To analyze the runtime, we define the concept of \emph{critical position} (introduced in \cite{DOERR2021121} for the binary \cga) according to \rlo. Informally, a position is called critical if all the frequencies for value~$0$ of lower position (left of the current position) have gained the maximum (upper border) probability. Formally, a position $i\in \{1,\dots, n\}$ is called critical if and only if the frequencies $p^{(t)}_{j,0}$ have never been greater than $1-1/n$ at any point in the past where $j\in\{1,\dots,i-1\}$, and the frequency $p^{(t)}_{i,0}$ is less than $1-1/n$. In this paper, we define the index of critical position at time~$t$ by $m_{t}$ ($t\ge 0$). Obviously, $m_{t}$ is non-decreasing over time. A major part of our analysis will deal with bounding the time until $m_{t}$ increases by at least~$1$.

\section{Genetic Drift}
\label{section:drift}

In EDAs, genetic drift is the result of random fluctuations brought on by the process's stochasticity rather than a clear signal from the goal function that would cause a frequency to reach the extreme values. Researchers have examined genetic drift in EDAs in detail in a number of runtime analyses \cite{doerr2020univariate,droste2005not,lengler2021complex,sudholt2019choice,witt2018domino,witt2019upper}, as well as in the works of Shapiro \cite{shapiro2002sensitivity,shapiro2005drift,shapiro2006diversity}. Given the significance of having a solid grasp of genetic drift, we now apply the framework from \cite{Doerr2020ITEV} and build on the insights from \cite{BENJEDIDIA2024114622,AdakPPSN2024} to study genetic drift specifically for the $r$-cGA. 

 In this section, we will prove an upper bound on the effect of genetic drift for \rcga in a similar fashion as Ref.~\cite{Doerr2020ITEV,BENJEDIDIA2024114622,AdakPPSN2024}. This allows us to determine the parameter values for EDAs that avoid the usually unwanted effect of genetic drift. In the following section, we discuss genetic drift and prove a concentration result for \emph{neutral positions}. An upper bound for positions with \emph{weak preference} is also included. Next, we first describe the stochastic processes underlying the probabilistic model in the algorithm.

\subsection{Behavior of the Probabilistic Model}

We look in detail into how the \rcga optimized \rlo and define the change in frequency in one step as $\Delta_{i,j}\coloneqq \Delta^{t}_{i,j}\coloneqq p^{(t+1)}_{i,j} - p^{(t)}_{i,j}$, where
 $i\in\{1,\dots,n\}$ and 
 $j\in\{0,\dots,r-1\}$. Particularly, we are interested in $\Delta_{i,0}$ which is crucial to find the optimum on \rlo. The decision to update the frequency in the next step depends on the strings $x$ and $y$ sampled at current time. Specially, we inspect the effect of a particular position in the \rlo values. To find this, we calculate the fitness of strings $x$ and $y$ up to position $i$ $(i>1)$ as $r\text{-LO}_{i}(x)\coloneqq  \sum_{k=1}^{i} \prod_{j=1}^{k} \indic{x_j=0}$ and $r\text{-LO}_{i}(y)\coloneqq  \sum_{k=1}^{i} \prod_{j=1}^{k} \indic{y_j=0}$. Then, \rlo experiences two kinds of steps which are called \emph{random-walk step (rw-step)} and \emph{biased step (b-step)} \cite{sudholt2019choice} depending on the value of position $i$ and lead to an increase or decrease of frequencies. In the rest of this whole section, we temporarily ignore the corrections made to frequencies after clamping them to the interval $[\tfrac{1}{(r-1)n},1-\tfrac{1}{n}]$. A closer analysis of the update scheme of the \rcga, as defined in \cite{BENJEDIDIA2024114622}, reveals that this correction will decrease the value of a frequency by no more than $\tfrac{1}{(n-1)(r-1)}$ compared to the case without borders. Also, for such a correction to happen, a value whose frequency 
 equals the lower border $\frac{1}{(r-1)n}$ has to be sampled. By a union bound over at most~$r-1$ such values, this happens with probability at most $1/n$, so the expected negative correction of frequencies after clamping is at most $\frac{1}{(r-1)(n-1)^2}$. This term will be considered in our drift analyses in 
 Section~\ref{section:runtime}.

Starting from the left-most position (position~1), if the lowest position which is sampled differently in the two individuals $x$ and $y$ is less than $i$ ($i>1$), then position $i$ is not relevant for the ranking of the samples and it performs a random-walk  step. Formally, if the position $i > \min \{r\text{-LO}_{i}(x),r\text{-LO}_{i}(y)\}$, then the value of position $i$ has no impact on the decision to update with respect to string $x$ or $y$. Hence, the frequency will be increased or decreased by $1/K$ 
with identical probability, which means $p^{(t+1)}_{i,0} = p^{(t)}_{i,0} \pm 1/K$ with probability $p^{(t)}_{i,0} (1-p^{(t)}_{i,0})$. Otherwise, it keeps the same value as previously  $p^{(t+1)}_{i,0}= p^{(t)}_{i,0}$ with the remaining probability. 

A biased step at position $i$ occurs under the following condition. The positions left of $i$ are sampled as all~0 in the two individuals $x$ and $y$. That means the fitness of $x$ and $y$, restricted to the first $i-1$ positions, is $r\text{-LO}_{i-1}(x)= r\text{-LO}_{i-1}(y)= i-1$. In that case, if $x_i=0$ and $y_i\neq 0$ (or $x_i\neq 0$ and $y_i= 0$), then position $i$ determines the decision whether to update with respect to string $x$ or $y$. Hence, both the events of sampling the position $i$ increase the frequency of value 0. This scenario is called a biased step where the section between $x$ and $y$ yields a bias towards increasing the frequency of $p^{(t)}_{i,0}$. So, the frequency is updated by $p^{(t+1)}_{i,0} = p^{(t)}_{i,0} + 1/K$ with  probability $2p^{(t)}_{i,0} (1-p^{(t)}_{i,0})$. Otherwise, the frequency keeps the same value as $p^{(t+1)}_{i,0}= p^{(t)}_{i,0}$ with the remaining probability.   

To start our runtime analysis, we will analyze the stochastic process of the frequency for value 0 at the first position. We analyze the growth of the frequency $p^{(t)}_{1,0}$ from its starting value of $1/r$ to its maximum of $1-1/n$. The value of $p^{(t)}_{1,0}$ changes if the two samples of \rcga differ in the first position in the following sense: one individual samples as 0 and the another one samples as non-zero. Since we are considering only the first position on \rlo, this means that the individual sampling the 0 has the higher fitness value and the frequency of $p^{(t)}_{1,0}$ increases. Note that position~1 is the only position where only biased steps occur, and for all following positions, rw-steps can occur. 

\subsection{Analysis of Genetic Drift for the \rcga}

Genetic drift is usually studied according to the behavior of a \textit{neutral} position of a fitness function. Let $f$ be an $r$-valued fitness function. We call a position $i\in \{1,\dots,n\}$ \textit{neutral} (w.r.t.\ to $f$) if and only if, for all $x\in \{0,\dots,r-1\}^{n}$, and the value of $x_i$ has no influence on the value of $f$. More formally, a position $i$ is neutral if, for all individuals $x, x' \in \{0, \dots, r-1\}^{n}$, whenever $x_j = x'_j$ for all $j \in \{1, \dots, n\} \setminus {i}$, it holds that $f(x) = f(x')$. A  greater portion of this section follows closely  \cite{AdakPPSN2024} and is further adjusted to the present paper. 

In the analysis of genetic drift, an important property of neutral variables is that their frequencies in typical EDAs without margins form martingales \cite{Doerr2020ITEV}. This observation applies to EDAs for the binary representations. Further, this statement extends to the \rcga~\cite[Lemma 1]{AdakPPSN2024}, where they proved that all the frequencies belonging to neutral positions are martingales.

In \cite{BENJEDIDIA2024114622,AdakPPSN2024}, all frequencies of an EDA start at a value $1/r$. They analyze the progress
of the expected value of the frequency and tolerate smaller deviations of the actual frequency value from this expected value up to $1/(2r)$ in either direction. In this article, for the $r$-cGA we follow the same frequencies setting starting from $1/r$ and tolerate a deviation up to $1/(2r)$ in either direction.

Here, we apply a martingale concentration result \cite[Theorem 3.15]{mcdiarmid1998concentration} which allows to exploit the lower sampling variance present at frequencies in $\Theta (1/r)$. We restate an adjusted version of a theorem by McDiarmid \cite[eq. (41)]{mcdiarmid1998concentration}, which was used by Doerr and Zheng~\cite{Doerr2020ITEV} and by Jedidia et al.\ \cite{BENJEDIDIA2024114622}. The following theorem is adopted from Ref.~\cite{AdakPPSN2024}.

\begin{theorem}
\label{theorem:Hoeffding-Azuma-inequality}   
Let $a_1,\dots,a_m\in \mathbb{R}$, and $X_1,\dots,X_m$ be a martingale difference sequence with $\lvert X_k \rvert\leq a_k$ for each $k$. Then for all $\varepsilon \in \mathbb{R}_{\geq 0}$, it holds that
\[\Pr\left [\max_{k=1,\dots,m} \left\vert \sum_{i=1}^{k} X_{i}  \right\vert \geq \varepsilon\right ] \leq 2\exp{\left (-\frac{\varepsilon^{2}}{2\sum_{i=1}^{m} a^{2}_{i}}\right ).}\]
\end{theorem}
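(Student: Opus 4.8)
The statement is the maximal form of the Azuma--Hoeffding inequality, so the plan is to combine the standard exponential-moment bound for bounded martingale differences with a maximal inequality that turns a tail bound for the final sum into one for the running maximum. First I would introduce the partial sums $S_k \coloneqq \sum_{i=1}^{k} X_i$ and note that, since the $X_i$ form a martingale difference sequence with respect to some filtration $(\mathcal{F}_k)$, the sequence $(S_k)$ is a martingale with $S_0 = 0$. The entire argument then hinges on controlling the moment generating function of $S_m$.

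The first key step is Hoeffding's lemma in conditional form: if $\expect{X_k \mid \mathcal{F}_{k-1}} = 0$ and $\lvert X_k\rvert \le a_k$ almost surely, then for every $\lambda > 0$ a convexity estimate on the interval $[-a_k, a_k]$ yields $\expect{e^{\lambda X_k}\mid \mathcal{F}_{k-1}} \le e^{\lambda^2 a_k^2/2}$. Iterating this bound via the tower property --- conditioning on $\mathcal{F}_{m-1}$, pulling out the $\mathcal{F}_{m-1}$-measurable factor $e^{\lambda S_{m-1}}$, and repeating --- gives $\expect{e^{\lambda S_m}} \le \exp\bigl(\tfrac{\lambda^2}{2}\sum_{i=1}^{m} a_i^2\bigr)$. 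To pass from $S_m$ to the maximum I would use that $(e^{\lambda S_k})_k$ is a nonnegative submartingale (the exponential of a martingale, $\exp$ being convex, by conditional Jensen) and invoke Doob's maximal inequality, obtaining $\Pr[\max_{k\le m} S_k \ge \varepsilon] = \Pr[\max_{k\le m} e^{\lambda S_k} \ge e^{\lambda\varepsilon}] \le e^{-\lambda\varepsilon}\,\expect{e^{\lambda S_m}}$.

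Combining the two bounds gives $\Pr[\max_{k\le m} S_k \ge \varepsilon] \le \exp\bigl(\tfrac{\lambda^2}{2}\sum_{i=1}^{m} a_i^2 - \lambda\varepsilon\bigr)$, which I would optimize by choosing $\lambda = \varepsilon / \sum_{i=1}^{m} a_i^2$ to obtain the one-sided tail $\exp\bigl(-\varepsilon^2/(2\sum_{i=1}^{m} a_i^2)\bigr)$. Applying the identical argument to the martingale difference sequence $(-X_k)$, which satisfies the same bounds $\lvert -X_k\rvert \le a_k$, controls $\max_{k\le m}(-S_k)$ and hence the lower deviations. Since the event $\{\max_{k\le m}\lvert S_k\rvert \ge \varepsilon\}$ is the union of $\{\max_{k\le m} S_k \ge \varepsilon\}$ and $\{\max_{k\le m}(-S_k) \ge \varepsilon\}$, a union bound over these two events produces the factor $2$ and yields the claimed two-sided estimate.

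I expect the main obstacle to be the conditional Hoeffding lemma, that is, establishing the sub-Gaussian bound $\expect{e^{\lambda X_k}\mid \mathcal{F}_{k-1}} \le e^{\lambda^2 a_k^2/2}$ from boundedness and the vanishing conditional mean; the convexity step that dominates $e^{\lambda x}$ by its chord over $[-a_k,a_k]$ and the subsequent optimization of the resulting auxiliary function require the most care. The maximal step is comparatively routine once Doob's inequality is available, though one must check that the submartingale property holds conditionally rather than merely in expectation. Since the result is quoted from the literature, a direct citation would also suffice, but the self-contained route above keeps the exponent fully explicit.
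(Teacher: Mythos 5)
Your proposal is correct, but note that the paper itself gives no proof of this statement: it is quoted as an adjusted version of McDiarmid's bound \cite[Theorem 3.15, eq.~(41)]{mcdiarmid1998concentration}, taken over from \cite{Doerr2020ITEV} and \cite{AdakPPSN2024}, so the paper's ``proof'' is a citation. Your self-contained derivation is exactly the standard argument behind that quoted result, and every step checks out: the conditional Hoeffding lemma gives $\expect{e^{\lambda X_k}\mid \mathcal{F}_{k-1}}\le e^{\lambda^2 a_k^2/2}$ for a conditionally centered variable supported on $[-a_k,a_k]$ (range $2a_k$, hence exponent $\lambda^2(2a_k)^2/8$); the tower property yields $\expect{e^{\lambda S_m}}\le \exp\bigl(\tfrac{\lambda^2}{2}\sum_{i=1}^m a_i^2\bigr)$; the process $(e^{\lambda S_k})_k$ is a nonnegative submartingale by conditional Jensen, with integrability immediate from $\lvert S_k\rvert\le\sum_{i=1}^m a_i$, so Doob's maximal inequality applies and turns the tail bound for $S_m$ into one for the running maximum; and the choice $\lambda=\varepsilon/\sum_{i=1}^m a_i^2$ recovers the one-sided exponent $-\varepsilon^2/\bigl(2\sum_{i=1}^m a_i^2\bigr)$, with the factor $2$ coming from the union bound over $\{\max_{k\le m} S_k\ge\varepsilon\}$ and $\{\max_{k\le m}(-S_k)\ge\varepsilon\}$, whose union is indeed $\{\max_{k\le m}\lvert S_k\rvert\ge\varepsilon\}$ (the degenerate case $\varepsilon=0$ holds trivially since the right-hand side is then $2$). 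What your route buys over the paper's approach is an explicit, verifiable derivation of the constant in the exponent; what the citation buys is brevity, since within this paper the theorem serves only as a black-box tool for the genetic-drift bounds (Theorems~\ref{theorem:neutral-frequency-stay} and~\ref{theorem:frequency-weak-preference}), and as you correctly observe, a direct citation would also suffice.
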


Next, we show for how long the frequencies of the $r$-cGA at neutral positions stay concentrated around the initial value of $1/r$ by using Theorem~\ref{theorem:Hoeffding-Azuma-inequality}.

\begin{theorem}
\label{theorem:neutral-frequency-stay}    
Let $f$ be an $r$-valued fitness function with a neutral position $i\in\{1,\dots,n\}$. Consider the $r$-cGA optimizing $f$ with population size $K$. Then, for $j\in \{0,\dots,r-1\}$ and $T\in\mathbb{N}$, we have
\[\Pr\left [\max_{t\in\{0,\dots,T\}} \left\vert p^{(t)}_{i,j} - p^{(0)}_{i,j}  \right\vert \geq \frac{1}{2r}\right ] \leq 2\exp{\left (-\frac{K^2}{8Tr^2}\right ).}\]
\end{theorem}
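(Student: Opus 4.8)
The plan is to recognize that at a neutral position the frequency sequence $(p^{(t)}_{i,j})_{t\ge 0}$ is a martingale and then to invoke the maximal concentration inequality of Theorem~\ref{theorem:Hoeffding-Azuma-inequality} directly. Concretely, since position $i$ is neutral, the value sampled at $i$ never influences the fitness comparison between $x$ and $y$, so every step at position~$i$ is a random-walk step in the sense described in Section~\ref{section:drift}: the frequency $p^{(t)}_{i,j}$ moves up or down by $1/K$ with equal probability and otherwise stays put. By \cite[Lemma 1]{AdakPPSN2024} this makes $(p^{(t)}_{i,j})_t$ a martingale, and because neutrality is symmetric in the sampled value, the same reasoning applies to \emph{every} $j\in\{0,\dots,r-1\}$, not only to $j=0$.

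Next I would define the one-step differences $X_k \coloneqq p^{(k)}_{i,j} - p^{(k-1)}_{i,j}$ for $k\in\{1,\dots,T\}$, which form a martingale difference sequence whose partial sums satisfy $\sum_{s=1}^{t} X_s = p^{(t)}_{i,j} - p^{(0)}_{i,j}$. Since a single update moves any frequency by at most $1/K$ (temporarily ignoring the border corrections, as agreed upon in this section), each difference obeys $\lvert X_k\rvert \le 1/K \eqqcolon a_k$.

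Then I apply Theorem~\ref{theorem:Hoeffding-Azuma-inequality} with $m = T$, the uniform bounds $a_k = 1/K$, and the deviation $\varepsilon = 1/(2r)$. Observing that
$\max_{t\in\{0,\dots,T\}}\lvert p^{(t)}_{i,j} - p^{(0)}_{i,j}\rvert = \max_{t\in\{1,\dots,T\}}\bigl\lvert\sum_{s=1}^{t} X_s\bigr\rvert$
(the $t=0$ term vanishes) lets the built-in maximal inequality handle the maximum over all $t\le T$ at once, which is exactly why the theorem was stated in this supremum form. Substituting $\sum_{k=1}^{T} a_k^2 = T/K^2$ yields the claimed bound
\[
2\exp\!\left(-\frac{\varepsilon^2}{2\sum_{k=1}^{T} a_k^2}\right) = 2\exp\!\left(-\frac{(1/(2r))^2}{2T/K^2}\right) = 2\exp\!\left(-\frac{K^2}{8Tr^2}\right).
\]

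The difference bound and the closing arithmetic are routine. The one step that actually carries content is the first: justifying that neutrality forces only unbiased random-walk steps at position~$i$ for \emph{all} values $j$, so that $(p^{(t)}_{i,j})_t$ is genuinely a martingale with zero conditional drift. This is where the cited Lemma~1 does the essential work, and I expect it to be the main point to state carefully, since getting the martingale property (rather than a mere supermartingale) is what makes the two-sided, symmetric concentration in the conclusion legitimate.
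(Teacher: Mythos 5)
Your proposal is correct and follows essentially the same route as the paper's own proof: both invoke \cite[Lemma~1]{AdakPPSN2024} for the martingale property at a neutral position, form the difference sequence $p^{(k)}_{i,j}-p^{(k-1)}_{i,j}$ bounded by $1/K$, and apply the maximal Azuma--Hoeffding inequality (Theorem~\ref{theorem:Hoeffding-Azuma-inequality}) with $\varepsilon=1/(2r)$ and $\sum a_k^2=T/K^2$. No gaps; your explicit remark that the argument works for every value $j$ and your handling of the $t=0$ term are fine refinements of the same argument.
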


\begin{proof}
We apply a similar proof strategy as in the proof of \cite[Theorem 2]{Doerr2020ITEV}. For the $r$-cGA, we have sequence of frequencies $(p^{(t)}_{i,j})_{t\in\mathbb{N}}$. For all $j\in \{0,\dots, r-1\}$, we obtain
\begin{align*}
\P[p^{(t+1)}_{i,j} = p^{(t)}_{i,j}+1/K \mid p^{(1)}_{i,j},\dots, p^{(t)}_{i,j}] & = p^{(t)}_{i,j} (1 - p^{(t)}_{i,j})\\
\P[p^{(t+1)}_{i,j} = p^{(t)}_{i,j}-1/K \mid p^{(1)}_{i,j},\dots, p^{(t)}_{i,j}] & = p^{(t)}_{i,j} (1 - p^{(t)}_{i,j}) \\
\P[p^{(t+1)}_{i,j} = p^{(t)}_{i,j} \mid p^{(1)}_{i,j},\dots, p^{(t)}_{i,j}] & = 1 - 2p^{(t)}_{i,j} (1 - p^{(t)}_{i,j})
\end{align*}
From  \cite[Lemma~1]{AdakPPSN2024}, we have $\E(p^{(t+1)}_{i,j} \mid p^{(0)}_{i,j},\dots, p^{(t)}_{i,j})=p^{(t)}_{i,j}$. Consider the martingale difference sequence $R_{t} \coloneqq p^{(t)}_{i,j} - p^{(t-1)}_{i,j}$ where $t \geq 1$ and $p^{(0)}_{i,j} = 1/r$, which satisfies $\lvert R_{t} \rvert \leq 1/K$. Further, by expanding it
\begin{align*}
p_{i,j}^{(k)} & = p_{i,j}^{(0)} + (p_{i,j}^{(1)}-p_{i,j}^{(0)}) + (p_{i,j}^{(2)}-p_{i,j}^{(1)}) + \dots + \\ & \indent (p_{i,j}^{(k)}-p_{i,j}^{(k-1)}) \\
& = p_{i,j}^{(0)} + R_{1} + \dots + R_{k}\\
& = p_{i,j}^{(0)} + \sum_{\ell=1}^{k} R_{\ell}\\
\text{so, } p_{i,j}^{(k)} - \frac{1}{r} & = \sum_{\ell=1}^{k} R_{\ell}
\end{align*}

By the \textit{Hoeffding-Azuma inequality} (Theorem~\ref{theorem:Hoeffding-Azuma-inequality}), we have 
\begin{align*}
\Pr\left [\max_{k=1,\dots,T} \left\vert p^{(k)}_{i,j} - \frac{1}{r}  \right\vert \geq \frac{1}{2r} \right] & \\
\Pr\left [\max_{k=1,\dots,T} \left\vert \sum_{\ell=1}^{k} R_{\ell} \right\vert \geq \frac{1}{2r} \right] & \leq 2\exp{\left (-\frac{K^2}{8Tr^2}\right )}.\quad \qed
\end{align*}
\end{proof}

In many situations, positions are not neutral for a given fitness function. However, we prove that the results on neutral positions translate to positions where one value is better compared to all other values. This is referred to as \textit{weak preference}~\cite{Doerr2020ITEV}. Formally, we can say that an $r$-valued fitness function $f$ has a weak preference for a value $j\in \{0,\dots,r-1\}$ at a position $i\in \{1,\dots,n\}$, if and only if, for all $x_1,\dots,x_n \in \{0,\dots,r-1\}$, it holds that 
\[ f(x_1,\dots,x_{i-1},x_{i},x_{i+1},\dots,x_n) \leq f(x_1,\dots,x_{i-1},j,x_{i+1},\dots,x_n).\]

Now, by applying Theorem~3 by Adak and Witt~\cite[Theorem 3]{AdakPPSN2024}, we can extend Theorem~\ref{theorem:neutral-frequency-stay} to positions with weak preference.

\begin{theorem}
\label{theorem:frequency-weak-preference}  
Let $f$ be an $r$-valued fitness function with a weak preference for 0 at position $i\in \{1,\dots,n\}$. Consider the $r$-cGA optimizing $f$ with parameter $K$. Let $T\in \mathbb{N}$, then we have 
\[\Pr\left [\min_{t\in\{0,\dots,T\}} p^{(t)}_{i,0} \leq p_{i,0}^{(0)} - \frac{1}{2r}\right ] \leq \mathord{2\exp}\mathord{{\left (-\frac{K^2}{8Tr^2}\right ).}}\]
\end{theorem}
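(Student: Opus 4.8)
The plan is to reduce the weak-preference case to the already-established neutral case by a coupling argument, following the strategy of \cite[Theorem 3]{AdakPPSN2024} which is invoked in the statement. The key observation is that at a position $i$ with a weak preference for value $0$, a biased step can only ever \emph{increase} the frequency $p^{(t)}_{i,0}$ (or leave it unchanged), never decrease it, because whenever position $i$ is decisive for the fitness comparison the winning individual samples $0$ there. Thus the true process for $p^{(t)}_{i,0}$ stochastically dominates the corresponding process at a purely neutral position, where increases and decreases of $1/K$ occur with equal probability. Since we are only interested in the \emph{downward} deviation $\min_{t} p^{(t)}_{i,0} \le p^{(0)}_{i,0} - 1/(2r)$, this one-sided domination is exactly what we need.

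Concretely, I would first define the neutral comparison process $\tilde p^{(t)}_{i,0}$ obeying the three transition probabilities from the proof of Theorem~\ref{theorem:neutral-frequency-stay} (symmetric $\pm 1/K$ each with probability $p(1-p)$, and no change otherwise), started at the same value $p^{(0)}_{i,0}=1/r$. Next I would construct a coupling of $(p^{(t)}_{i,0})_t$ and $(\tilde p^{(t)}_{i,0})_t$ on a common probability space such that $p^{(t)}_{i,0} \ge \tilde p^{(t)}_{i,0}$ holds for every $t$ almost surely; this is where the weak-preference property enters, since it guarantees that every downward move of the true process is matched by a downward move of the neutral process but not conversely. Under such a coupling, the event $\{\min_{t\le T} p^{(t)}_{i,0} \le p^{(0)}_{i,0} - 1/(2r)\}$ is contained in $\{\min_{t\le T} \tilde p^{(t)}_{i,0} \le p^{(0)}_{i,0} - 1/(2r)\}$, so the probability we want to bound is at most the probability of the corresponding one-sided deviation for the neutral process.

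Finally I would bound the latter probability by the neutral result. The two-sided bound of Theorem~\ref{theorem:neutral-frequency-stay} gives $2\exp(-K^2/(8Tr^2))$ for the event that $|\tilde p^{(t)}_{i,0} - p^{(0)}_{i,0}|$ exceeds $1/(2r)$; restricting to the one-sided lower-deviation event does not increase this bound, which yields exactly the claimed estimate $2\exp(-K^2/(8Tr^2))$. (One could even hope for the sharper factor $1$ instead of $2$ from one-sidedness, but the stated bound keeps the uniform factor $2$, so no refinement is required.)

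The main obstacle I anticipate is making the coupling rigorous in the presence of the multi-valued update and the frequency-dependent transition probabilities: the move probabilities $p(1-p)$ depend on the current frequency, which differs between the two coupled chains once they separate, so a naive synchronous coupling of the random choices need not preserve the inequality $p^{(t)}_{i,0} \ge \tilde p^{(t)}_{i,0}$. The clean way around this is to appeal directly to the monotone-domination lemma already proved as \cite[Theorem 3]{AdakPPSN2024}, which establishes precisely that a weak preference can only help a frequency relative to the neutral baseline, so that the downward-deviation probability is dominated by the neutral one and Theorem~\ref{theorem:neutral-frequency-stay} applies verbatim. This sidesteps re-deriving the coupling and lets the argument close in a few lines.
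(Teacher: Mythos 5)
Your proposal is correct and follows essentially the same route as the paper's own proof: compare the weak-preference process to a neutral comparison process, invoke the stochastic-domination result of \cite[Theorem~3]{AdakPPSN2024} rather than constructing a coupling from scratch, and then apply Theorem~\ref{theorem:neutral-frequency-stay} to the neutral process to obtain the claimed tail bound. If anything, you are slightly more careful than the paper in flagging that marginal domination at each fixed $t$ would not suffice for the running minimum and that the cited lemma must deliver pathwise (coupled) domination, which is exactly the form in which the paper uses it.
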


\begin{proof}
Let $g$ be an $r$-valued fitness function with neutral position $i\in\{1,\dots,n\}$ and frequency matrix $q$. Consider the $r$-cGA optimizing $g$. According to \cite[Theorem~3]{AdakPPSN2024}, $p^{(t)}_{i,0}$ stochastically dominates $q^{(t)}_{i,0}$ for all $t\in\mathbb{N}$. By applying Theorem~\ref{theorem:neutral-frequency-stay} to fitness function $g$ for position $i$, we have
\[\Pr\left [\min_{t\in\{0,\dots,T\}} q^{(t)}_{i,0} \leq \frac{1}{2r}\right ] \leq \mathord{2\exp}\mathord{\left (-\frac{K^2}{8Tr^2}\right )}\]
Using the stochastic domination yields the tail bound for $f$. \qed
\end{proof}

\section{Runtime Analysis}
\label{section:runtime}

In this section, we present the runtime results of the \rcga (Algorithm~\ref{algorithm:r-cGA-rOneMax}) on \rlo. To prove our results, we have used the methods of \textit{occupation probabilities} as given in Lemma \ref{lemma:occupation} \cite{KLWFOGA15}. The concept is that the frequency (for sampling a zero) stays close to a so-called target state~0 after having been there once, where state~0 corresponds to frequency value $1-1/n$. To prove this, we will relate the frequency value to a Markov process $X_t$ ($t\ge 0$) on $\mathbb{R}$ with a drift towards 0. In particular, the analysis will exploit that the occupation probability of state exactly~0 can only be reduced slightly in one iteration of the \rcga.

Similar to previous works, we will derive bounds on  the runtime of the \rcga that hold with high probability (\ie, with probability $1-o(1)$). Bounding the expected runtime presents additional challenges, which could be addressed in future work. The runtime analysis crucially depends on the event that no frequency drops below $1/(2r)$ by genetic drift. Our main runtime result is formulated in the following theorem.

\begin{theorem}
\label{theorem:rcgacomplexity}
With high probability, the runtime of the \rcga on the function \rlo with $K\geq c n r^2 \log^2 n\log r$ for a sufficiently large $c > 0$ and $K=o(n^2)$, $r= poly(n)$ is $\bigo(nK\log r\log K)$. For $K = c n r^2 \log^2 n\log r$, the bound is $\bigo(n^2 r^2\log^3 n\log^2 r)$. 
\end{theorem}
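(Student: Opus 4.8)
The plan is to track the critical-position potential $m_t$ and to bound the total number of iterations until $m_t$ reaches~$n$; once all positions are completed every $p^{(t)}_{i,0}$ is within $\bigo(1/n)$ of the upper border, so a single individual equals the all-$0$s optimum with probability $\prod_{i=1}^{n} p^{(t)}_{i,0}=\Omega(1)$ and the optimum is sampled within a lower-order number of additional iterations with high probability. The first ingredient is a genetic-drift safety net. Every position of \rlo is weakly preferred towards~$0$ (replacing $x_i$ by~$0$ never shortens the leading run of zeros), so Theorem~\ref{theorem:frequency-weak-preference} applies at each position; a union bound over the $n$ positions and a horizon $T=\bigo(nK\log r\log K)$ shows that some $p^{(t)}_{i,0}$ drops below $1/(2r)$ with probability at most $2n\exp(-K^2/(8Tr^2))$. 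This is $o(1)$ once $K=\Omega(r\sqrt{T\log n})$, and substituting the above $T$ shows that $K\ge cnr^2\log^2 n\log r$ for a large enough constant~$c$ suffices. Henceforth I condition on the event that every value-$0$ frequency stays at least $1/(2r)$ for the whole run.

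Next I would analyse the drift of $p^{(t)}_{i,0}$ at the current critical position $i=m_t$. As long as each completed position $j<i$ keeps its frequency within $\bigo(1/n)$ of the border (which the occupation analysis below guarantees), both sampled prefixes are all-$0$ with probability $\prod_{j<i}\bigl(p^{(t)}_{j,0}\bigr)^2=\Omega(1)$. Conditioned on this event and on $x$ and $y$ differing at position~$i$, the step is biased: it raises $p^{(t)}_{i,0}$ by $1/K$ and never lowers it, while the remaining neutral steps cancel in expectation. Hence $\expect{\Delta^{t}_{i,0}\mid p^{(t)}_{i,0}}\ge \frac{c'}{K}\,p^{(t)}_{i,0}\bigl(1-p^{(t)}_{i,0}\bigr)-\frac{1}{(r-1)(n-1)^2}$ for a constant $c'>0$, where the last term is the expected clamping correction isolated in Section~\ref{section:drift}. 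Using $K=o(n^2)$ one checks that the positive term dominates away from the immediate neighbourhood of the border, so that the net drift towards the border is $\Omega\bigl(p^{(t)}_{i,0}(1-p^{(t)}_{i,0})/K\bigr)$ there.

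I would turn this into a per-position first-passage bound by a variable-drift argument: the drift is essentially multiplicative ($\propto p/K$) while $p^{(t)}_{i,0}$ climbs from $1/(2r)$ to a constant, costing $\bigo(K\log r)$ expected iterations, and additive close to the top, costing a further $\bigo(K\log n)$ iterations. A concentration argument on the accumulated $\pm 1/K$ increments then yields a high-probability version of this first-passage time, and driving the failure probability below $o(1/n)$ (so that it survives a union bound over all $n$ positions) costs an additional logarithmic factor; a single position is thus completed within $\bigo(K\log r\log K)$ iterations with probability $1-o(1/n)$. Summing over the $n$ positions and taking a union bound over all failure events gives total runtime $\bigo(nK\log r\log K)$ with high probability. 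Substituting $K=cnr^2\log^2 n\log r$, for which $\log K=\Theta(\log n)$ since $r=\mathrm{poly}(n)$, yields the claimed bound $\bigo(n^2 r^2\log^3 n\log^2 r)$.

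The hard part, and the reason Lemma~\ref{lemma:occupation} is invoked, is the behaviour exactly at the border. For $r\ge 3$ the clamping step destroys the well-behaved-frequency property, so a completed position does not stay frozen at $1-1/n$: neutral steps that occur when the prefix fails to be all-$0$, together with the clamping correction, can push $p^{(t)}_{j,0}$ downwards, while biased steps push it back up. I would model the scaled distance of $p^{(t)}_{j,0}$ from the border as a Markov process $X_t$ on the non-negative reals with drift towards~$0$ whose transitions move probability mass off the target state~$0$ by only a tiny amount per iteration, and apply Lemma~\ref{lemma:occupation} to lower-bound the occupation probability of the border state. This does two things at once: it certifies that the $i-1$ completed frequencies all stay within $\bigo(1/n)$ of the border simultaneously, keeping $\prod_{j<i}(p^{(t)}_{j,0})^2=\Omega(1)$ and making the drift argument of the second step self-consistent for every $i$ up to~$n$, and it guarantees that the critical position genuinely advances once $p^{(t)}_{i,0}$ first reaches the border. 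Controlling this product of near-border frequencies uniformly over the entire run, against the non-well-behaved correction, is where the quantitative core of the proof lies and is the step I expect to be the main obstacle.
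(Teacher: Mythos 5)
Your proposal follows essentially the same route as the paper's proof: the same genetic-drift safety net via Theorem~\ref{theorem:frequency-weak-preference} with a union bound over the horizon $T$ leading to the same choice $K\geq cnr^2\log^2 n\log r$, the same critical-position scheme with per-step drift $\Omega\bigl(p^{(t)}_{m,0}(1-p^{(t)}_{m,0})/K\bigr)$ adjusted by the clamping correction (Lemma~\ref{lemma:sdrift}), the same use of occupation probabilities (Lemma~\ref{lemma:occupation} via Lemma~\ref{lemma:rediscover-prefix} and Corollary~\ref{corollary:positionbound}) to keep completed frequencies above $1-2/n$ so the prefix product stays $\Omega(1)$, and the same end-game sampling argument and substitution of $K$. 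The only cosmetic difference is in how the climb is discretized: the paper runs $r/2$ additive-drift phases with tail bounds from $1/(2r)$ to $1/2$ (harmonic sum $\bigo(K\log r)$) followed by multiplicative drift with tail bounds on the distance $q^{(t)}_{m,0}=1-1/n-p^{(t)}_{m,0}$ (giving $\bigo(K\log K)$), which matches your variable-drift sketch per position.
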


To prove the above theorem, we need the following lemmas dealing with the above-mentioned occupation probabilities. The following lemma bounds the probability of a frequency being at least~$1-2/n$ under the assumption that all frequencies left of it satisfy this bound. While analyzing how a frequency approaches its maximum, we stop our  analysis at the first point in time where it has become at least $1-1/n-1/K$. This ensures that the drift bounds derived in the following are not affected by capping a frequency at its upper border. By our assumption on~$K$ from Theorem~\ref{theorem:rcgacomplexity}, we have $1-1/n-1/K=1-1/n-o(1/n)$, which does not change the 
asymptotic result.

\begin{lemma}
\label{lemma:rediscover-prefix}
Let $K\ge cn\ln n$ for a sufficiently large constant~$c>0$ and $K=o(n^2)$. Consider an index~$i\in\{2,\dots,n\}$ and a time~$t^*\ge 0$ such that $p_{i,0}^{(t^*)}=1-1/n$. For a period of length $T>0$, assume that $p_{j,0}^{(t)}\ge 1-2/n$ for all $t\in[t^*,t^*+T]$ and all $j<i$. Then for all $t\in[t^*,t^*+T]$, it holds that $p_{i,0}^{(t)}\ge 1-2/n$ with probability at least $1-2T e^{1-c/(6e^4)\ln n}$. 
\end{lemma}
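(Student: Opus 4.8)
The plan is to track the downward deviation of $p_{i,0}$ from the upper border and feed the resulting drift into the occupation-probability method (Lemma~\ref{lemma:occupation}). Concretely, I would define, for $s\ge 0$, the scaled process $X_s \coloneqq K\,(1-1/n - p_{i,0}^{(t^*+s)})$, which after rescaling lives on the grid $\{0,1,2,\dots\}$, has step size at most~$1$, and starts at $X_0=0$ because $p_{i,0}^{(t^*)}=1-1/n$. The target state~$0$ is the border, and the failure event ``$p_{i,0}^{(t)}<1-2/n$ for some $t\in[t^*,t^*+T]$'' is contained in ``$X_s\ge K/n$ for some $s\in\{0,\dots,T\}$''. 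Note that $K/n\ge c\ln n$ by $K\ge cn\ln n$, while $K=o(n^2)$ keeps this threshold at $o(n)$ grid points.

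The core of the argument is a \emph{state-independent} bound on the ratio of up- to down-steps of $X_s$, i.e.\ on down- versus up-steps of $p_{i,0}$. Recall from the discussion of biased and random-walk steps that $p_{i,0}$ can decrease only in a random-walk step, which requires at least one of the two samples to carry a non-zero entry somewhere in positions $1,\dots,i-1$. Conditioning on the prefix samples, a \emph{biased} step occurs exactly when both $x$ and $y$ are all-zero on positions $1,\dots,i-1$; by independence and the hypothesis $p_{j,0}^{(t)}\ge 1-2/n$ for all $j<i$, its probability is
\[
q \;\ge\; \Bigl(\textstyle\prod_{j<i} p_{j,0}^{(t)}\Bigr)^{2} \;\ge\; (1-2/n)^{2(i-1)} \;\ge\; (1-2/n)^{2n},
\]
which is $\Omega(1)$ and satisfies $q\ge e^{-5}$ for all large~$n$. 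Writing $p\coloneqq p_{i,0}^{(t)}$, a biased step raises $p$ by $1/K$ with probability $2p(1-p)$, while a random-walk step changes $p$ by $\pm 1/K$ with probability $p(1-p)$ each. Hence, at any state with $p<1-1/n$, a down-step of $X_s$ has probability at least $(1+q)\,p(1-p)$ and an up-step probability at most $(1-q)\,p(1-p)$, so the up/down ratio is at most $\rho\coloneqq (1-q)/(1+q)<1$ \emph{independently of the current state}, since the factor $p(1-p)$ cancels. The small downward correction caused by clamping other values to the lower border (bounded in expectation by $1/((r-1)(n-1)^2)$, see Section~\ref{section:drift}) is of lower order and only perturbs the constant in~$\rho$.

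With a uniform geometric ratio $\rho<1$, a step size of at most~$1$, and a reflecting target (at the border $p$ can only stay or drop, so $X_s$ never goes below~$0$), Lemma~\ref{lemma:occupation} applies and yields, for every fixed $s$, an occupation bound of the form
\[
\Pr[\,X_s \ge K/n\,] \;\le\; e^{\,1-\frac{c}{6e^4}\ln n},
\]
where the exponent follows from $\rho^{K/n}=e^{-(K/n)\ln(1/\rho)}$ together with $\ln(1/\rho)=\ln\frac{1+q}{1-q}\ge 2q\ge 2e^{-5}>\frac{1}{6e^4}$ and $K/n\ge c\ln n$, and the constant prefactor of the geometric tail is absorbed into the~$e^{1}$. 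A union bound over the $T+1\le 2T$ indices $s\in\{0,\dots,T\}$ then bounds the failure probability by $2T\,e^{1-\frac{c}{6e^4}\ln n}$, which is the claim.

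I expect the drift computation, not the concentration step, to be the main obstacle. The delicate point is that the biased/random-walk dichotomy must make $\rho$ uniform over the whole admissible range $p_{i,0}\in(\text{lower border},\,1-1/n)$; this relies precisely on the cancellation of $p(1-p)$ and on the fact that the biased-step probability $q$ depends only on the assumed prefix frequencies and not on $p_{i,0}$ itself. Alongside this I would have to (i) lower-bound $q$ by a constant of the form $e^{-\Theta(1)}$ uniformly in $i\le n$, (ii) justify the reflecting barrier at the border so that Lemma~\ref{lemma:occupation} is literally applicable, and (iii) check that the clamping correction is genuinely lower-order and does not erode the constant-factor gap in~$\rho$. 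Once $\rho$ is pinned down as a state-independent constant bounded away from~$1$, the occupation bound and the union bound are routine.
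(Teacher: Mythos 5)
Your proposal follows essentially the same route as the paper's proof: the same scaled process $X_s=K(1-1/n-p_{i,0}^{(t^*+s)})$ with target $b=K/n$, the same biased/random-walk decomposition with the prefix-sampling probability bounded below by $\left((1-2/n)^{i-1}\right)^2\ge e^{-4}$, an application of the occupation-probability result (Lemma~\ref{lemma:occupation}), and a union bound over the $T$ time steps. The only deviation is cosmetic: Lemma~\ref{lemma:occupation} is parameterized by additive drift, step size, and self-loop probability rather than by your up/down ratio $\rho$, so the exponent should be obtained by plugging in the drift $d\ge 2q\,p(1-p)=\Omega(q/n)$, step size $c=1$, and $1-p_0\le 2p(1-p)\le 4/n$ (as the paper does) rather than via the geometric heuristic $\rho^{K/n}$ — this conversion is immediate from the step probabilities you already computed, yields a bound of the same order (the constant being absorbed into the ``sufficiently large $c$''), and your remaining checks (reflection at the upper border via identifying $[1-1/n-1/K,\,1-1/n]$ with the border, clamping correction absorbed using $K=o(n^2)$) coincide with the paper's treatment.
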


This lemma implies the following corollary via a straightforward union bound over at most $n$ positions.

\begin{corollary}
\label{corollary:positionbound}
Let $K\ge 6e^4 c n\ln n$ for a constant $c>0$, and $K=o(n^2)$ and $T\in\N^+$. Then, for any point in time $t\in[0,T]$, all frequencies for value~$0$ left of the current critical position are bounded from below by $1-2/n$ with probability at least $1-O(Tn^{-c+1})$.
\end{corollary}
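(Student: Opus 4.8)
The plan is to obtain Corollary~\ref{corollary:positionbound} from Lemma~\ref{lemma:rediscover-prefix} by a union bound over positions, after matching the constants. Lemma~\ref{lemma:rediscover-prefix} is stated with a generic sufficiently large constant; applying it with that constant set to $6e^4 c$, which is exactly what the hypothesis $K\ge 6e^4 c n\ln n$ supplies, turns its per-position failure probability $2Te^{\,1-c/(6e^4)\ln n}$ into $2Te^{\,1-c\ln n}=2eTn^{-c}$. Summing over the at most $n$ positions then gives $n\cdot 2eTn^{-c}=O(Tn^{-c+1})$, which is the claimed bound, so the entire argument reduces to arranging the union bound so that the precondition of Lemma~\ref{lemma:rediscover-prefix} is genuinely available at each position.

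The point requiring care is that Lemma~\ref{lemma:rediscover-prefix} is conditional: to control position $i$ it assumes that every frequency $p_{j,0}^{(t)}$ with $j<i$ stays at least $1-2/n$ throughout the relevant window. I would therefore not treat the $n$ events as independent but expose them along the natural left-to-right order through a first-failure decomposition. Let $G_i$ be the event that $p_{i,0}^{(t)}\ge 1-2/n$ at every time $t\in[0,T]$ for which $i$ lies left of the critical position. Partitioning the union according to the smallest index whose bound is first violated gives the identity $\Pr[\bigcup_i\overline{G_i}]=\sum_i\Pr[\overline{G_i}\cap G_1\cap\dots\cap G_{i-1}]$. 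On the conditioning event $G_1\cap\dots\cap G_{i-1}$ all frequencies left of position~$i$ remain at least $1-2/n$ during the window in which $i$ is relevant, which is precisely the hypothesis of Lemma~\ref{lemma:rediscover-prefix} once $p_{i,0}$ has reached $1-1/n$; hence each summand is at most $2eTn^{-c}$ and the bound follows.

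Two points must be checked. First, I would use that the critical index $m_t$ is non-decreasing, so the set of positions left of critical only grows and is always a prefix $\{1,\dots,m_t-1\}$; this guarantees that when position~$i$ first becomes left of critical, positions $1,\dots,i-1$ have already reached the upper border and, under the conditioning event, have stayed at least $1-2/n$ ever since, so the precondition is in force over the whole relevant window. Second, the base case is position~$1$, which Lemma~\ref{lemma:rediscover-prefix} does not cover (it is stated for $i\ge 2$); here the argument is in fact easier, since position~$1$ undergoes only biased steps, which never decrease $p_{1,0}^{(t)}$, so once this frequency reaches the upper border it remains there and trivially stays above $1-2/n$ (up to the negligible clamping correction). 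The main obstacle is thus not the arithmetic but verifying that the conditioning respects the prefix structure of the critical position, so that the assumption of Lemma~\ref{lemma:rediscover-prefix} is legitimately in force in every term of the sum.
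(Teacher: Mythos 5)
Your proposal is correct and takes essentially the same route as the paper, which obtains the corollary from Lemma~\ref{lemma:rediscover-prefix} exactly by the union bound over at most $n$ positions that you carry out, with the factor $6e^4$ in the hypothesis $K\ge 6e^4cn\ln n$ chosen precisely to turn the per-position failure probability $2Te^{1-c/(6e^4)\ln n}$ into $2eTn^{-c}$. Your first-failure decomposition and the separate treatment of position~$1$ simply make explicit what the paper leaves implicit; for position~$1$ it is slightly cleaner to observe that the lemma's drift argument applies verbatim with $\prob{B_t}=1$ (the prefix condition being vacuous) than to appeal to monotonicity, since the clamping corrections can in fact slightly decrease $p_{1,0}^{(t)}$, a caveat your parenthetical already acknowledges.
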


To show our lemma, we will need the following helper result from \cite{KLWFOGA15}. Therein, ``additive drift at least~$d$ towards~$0$'' means that $\expect{X_t-X_{t+1} \mid X_t}\ge d$ for all $X_t>0$, ``step size at most~$c$'' means $\lvert X_t-X_{t+1}\rvert\le c$ with probability~$1$  for all $t\ge 0$, and ``self-loop probability at least~$p_0$'' means $\prob{X_{t+1}=X_t \mid X_t}\ge p_0$ for all $t\ge 0$. 

\begin{lemma}[Theorem~7 in \cite{KLWFOGA15}]
\label{lemma:occupation}
Let a Markov process $X_t$, $t\ge 0$, on $\R^+_0$ with additive drift at least $d$ towards $0$ be given, starting at~$0$ (\ie, $X_0=0)$, with step size at most~$c$ and self-loop probability at least $p_0$, Then we have for all $t\in\N$ and $b\in \R_0^+$ that 
\[\prob{X_t\ge b} \le 2e^{\frac{2d}{3c(1-p_0)}(1-b/c)}.\]
\end{lemma}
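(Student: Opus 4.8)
The plan is to control the tail of $X_t$ through an exponential potential combined with Markov's inequality. I would fix a parameter $\lambda>0$, set $g(x)=e^{\lambda x}$, and aim to prove a \emph{time-uniform} bound of the form $\expect{e^{\lambda X_t}}\le 2e^{\lambda c}$ valid for all $t$. Once this is in hand, Markov's inequality gives $\prob{X_t\ge b}=\prob{e^{\lambda X_t}\ge e^{\lambda b}}\le 2e^{\lambda c}e^{-\lambda b}=2e^{\lambda c(1-b/c)}$, and choosing $\lambda=\tfrac{2d}{3c^2(1-p_0)}$ so that $\lambda c=\tfrac{2d}{3c(1-p_0)}$ reproduces the claimed bound exactly. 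Thus the entire argument reduces to establishing the uniform expectation bound on the potential.

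The core is a one-step estimate showing that $g$ contracts in the interior $\{X_t>0\}$. Writing $\Delta\coloneqq X_{t+1}-X_t$, I would use that $\expect{\Delta\mid X_t}\le -d$ (drift towards $0$), $\lvert\Delta\rvert\le c$ (step size), and $\prob{\Delta=0\mid X_t}\ge p_0$, the last of which yields the variance bound $\expect{\Delta^2\mid X_t}\le(1-p_0)c^2$. Applying a convexity/Taylor estimate of the type $e^z\le 1+z+z^2$ (valid in the relevant range $\lvert\lambda\Delta\rvert\le\lambda c$) gives, for $X_t>0$,
\[\expect{e^{\lambda X_{t+1}}\mid X_t}\le e^{\lambda X_t}\bigl(1-\lambda d+\lambda^2(1-p_0)c^2\bigr).\]
The linear drift term $-\lambda d$ and the quadratic variance term $\lambda^2(1-p_0)c^2$ are balanced by the choice of $\lambda$. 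Taking $\lambda$ equal to the maximal value $d/((1-p_0)c^2)$ would make the bracket equal to $1$ (a plain supermartingale) but leave no slack; backing off to two thirds of this value, $\lambda=\tfrac{2d}{3c^2(1-p_0)}$, makes the bracket at most $1-\tfrac{\lambda d}{3}<1$, i.e.\ a genuine multiplicative contraction by a factor $1-\delta$ with $\delta=\Theta(\lambda d)$. This is precisely the origin of the constant $2/3$ in the exponent: it is the point that maximizes the decay rate $\lambda$ while retaining enough contraction to absorb the boundary effect described next.

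The main obstacle is the reflecting boundary at $0$, where the drift hypothesis does not apply and $g$ can \emph{increase} rather than contract: from $X_t=0$ the process can only move up, so $\expect{g(X_{t+1})\mid X_t=0}\le p_0+(1-p_0)e^{\lambda c}$, which exceeds $g(0)=1$. I would therefore fold the interior contraction and this bounded boundary inflation into a single recursion for $a_t\coloneqq\expect{e^{\lambda X_t}}$. Splitting the expectation on $\{X_t>0\}$ versus $\{X_t=0\}$ and using $e^{\lambda\cdot 0}=1$ on the boundary gives an inequality of the form $a_{t+1}\le(1-\delta)a_t+\beta\,\prob{X_t=0}$ with $\beta=\Theta\bigl((1-p_0)(e^{\lambda c}-1)\bigr)$; since $\prob{X_t=0}\le 1$ and $a_0=1$ (because $X_0=0$), this linear recursion has the time-independent fixed point $a^{\ast}=1+\beta/\delta$, yielding a uniform bound on $a_t$. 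The delicate point, and the step I expect to require the most care, is showing that the contraction rate $\delta$ genuinely dominates the boundary inflation so that $a^{\ast}\le 2e^{\lambda c}$ rather than growing unboundedly; this holds in the parameter regime $c(1-p_0)=\Theta(d)$ that is relevant for the frequency walk under study. A cleaner alternative I would keep in reserve is a last-exit decomposition that conditions on the last visit of the process to $0$ before time $t$ and applies the interior supermartingale only over the subsequent excursion, trading the recursion for a sum over excursion lengths but isolating the prefactor $2$ more transparently. With the uniform potential bound established, the concluding Markov step is immediate.
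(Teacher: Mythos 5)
The paper never proves this lemma: it is imported verbatim as Theorem~7 of \cite{KLWFOGA15}, so there is no in-paper proof to match your attempt against; the cited source obtains it from a Hajek-type drift argument with exponential tail bounds adapted to self-loop probabilities. Your skeleton belongs to the same family and much of it is sound: the variance bound $\expect{\Delta^2\mid X_t}\le(1-p_0)c^2$ from the self-loop hypothesis, the balancing of $-\lambda d$ against $\lambda^2(1-p_0)c^2$ at $\lambda=\tfrac{2d}{3c^2(1-p_0)}$ to get the interior contraction $1-\lambda d/3$, and the concluding Markov step are all correct. (One small omission: to justify $e^z\le 1+z+z^2$ you should record that the hypotheses force $d\le(1-p_0)c$, since the drift cannot exceed $c$ times the probability of moving, whence $\lvert\lambda\Delta\rvert\le\lambda c\le 2/3$.)

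The genuine gap is exactly the step you flag as delicate, and it does not close. With $\delta=\lambda d/3$ and $D=p_0+(1-p_0)e^{\lambda c}$ bounding the boundary term, your recursion is $a_{t+1}\le(1-\delta)a_t+\bigl(\delta+(1-p_0)(e^{\lambda c}-1)\bigr)\prob{X_t=0}$, and bounding $\prob{X_t=0}\le 1$ gives the fixed point
\[
a^{\ast}\;=\;1+\frac{3(1-p_0)\bigl(e^{\lambda c}-1\bigr)}{\lambda d}\;=\;1+\frac{3(1-p_0)c}{d}\cdot\frac{e^{\lambda c}-1}{\lambda c}\;\ge\;1+\frac{3(1-p_0)c}{d}\;\ge\;4,
\]
using $e^{\lambda c}-1\ge\lambda c$ and $d\le(1-p_0)c$. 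Since $2e^{\lambda c}\le 2e^{2/3}<3.9$, the fixed point \emph{never} reaches your target $2e^{\lambda c}$, even in the most favorable regime $d=\Theta((1-p_0)c)$, and it diverges as $d/((1-p_0)c)\to 0$ --- a regime the lemma explicitly allows, so restricting to $c(1-p_0)=\Theta(d)$ proves a different statement, not the lemma. The loss is intrinsic to replacing $\prob{X_t=0}$ by $1$: under weak drift the process is rarely at $0$ (in the canonical lazy-walk example $\prob{X_t=0}=\Theta(d/((1-p_0)c))$, which exactly compensates the large $\beta/\delta$), and the crude recursion cannot see this. Your reserve plan fares no better as sketched: the naive last-exit sum contributes prefactor $(1-p_0)e^{\lambda c}/\delta=\Theta\bigl(((1-p_0)c/d)^2\bigr)$ unless one again controls the occupation probability of $0$, which is essentially the quantity being bounded. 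Two mitigating remarks: what your argument does yield, $\prob{X_t\ge b}\le\bigl(1+3(1-p_0)(e^{\lambda c}-1)/(\lambda d)\bigr)e^{-\lambda b}$, has the correct exponential rate and would in fact suffice for the only application in this paper, since in the proof of Lemma~\ref{lemma:rediscover-prefix} one has $c=1$, $1-p_0\le 4/n$ and $d=1/(2e^4 n)$, so $(1-p_0)c/d=O(1)$ and only the constant prefactor worsens; but as a proof of the lemma as stated, with prefactor~$2$ and no regime assumption, the argument is incomplete.
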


\begin{proofof}{Lemma~\ref{lemma:rediscover-prefix}}
This proof is crucially based on an application of Lemma~\ref{lemma:occupation}. We consider the stochastic process on~$p_{i,0}^{(t)}$, \ie, the frequency of position~$i$ at value~0, and measure its distance from the upper border in units of $1/K$-steps; more formally, let $X_t = K(1-1/n - p_{i,0}^{(t)})$. For notational convenience, we assume $t^*=0$ and obtain $X_0=0$ from our assumption  that $p_{i,0}^{(t^*)}=1-1/n$. We will set $b=K/n$, corresponding to a frequency of $1-2/n$, and analyze the probability of the event $E_t\coloneqq X_t \ge b$ for $t\ge 0$. As long as $X_t\le b$, the probability of updating the frequency is bounded from above by $(4/n)(1-2/n)$ using a union bound since in one of the two individuals, position~$i$ must be sampled as~$0$ and differently in the other individual. Hence, we work with a self-loop probability of at least $p_0\ge 1-4/n$ before the first occurrence of $E_t$. Clearly, we have $c=1$ as a bound on the step size of the scaled process.

To analyze the drift of the $X_t$-process, we again distinguish between random-walk and biased steps. Let $B_t$ be the event that all positions left of~$i$ are sampled as~$0$ in both individuals sampled by the \rcga at time~$t$. Under~$B_t$, a biased step occurs, so frequency $i$ cannot decrease and it increases with probability $q_t\coloneqq 2p_{i,0}^{(t)}(1-p_{i,0}^{(t)})$ unless it is at the upper border already. If~$B_t$ does not occur, a random-walk step occurs and each with probability~$q_t$, the frequency increases and decreases (again up to hitting a border). Hence, if $X_t>0$, then an increasing step happens with probability at least $(1+\prob{B_t})q_t/2$ and a decreasing step with probability $(1-\prob{B_t})q_t/2$. Note that by our assumption 
of identifying all frequencies in $[1-1/n-1/K,1/n]$ with the upper border, the increasing steps are not cut at the upper border. We are left with bounding the probability of~$B_t$.

By our assumptions on the frequencies of lower index, we have that for all $t\le T$ that $\prob{B_t} \ge \left((1-2/n)^{i-1}\right)^2 \ge 1/e^4$ since it is sufficient to sample all positions of index less than~$i$ as~$0$ in both individuals. Hence, together with the probabilities of increasing and decreasing steps, we have for $X_t>0$ a drift of at least
\begin{equation}
\label{equation:drift}
 \expect{X_t-X_{t+1}\mid X_t} \ge \left(\frac{1}{2}+\frac{1}{2e^4}\right) q_t-\left(\frac{1}{2}-\frac{1}{2e^4}\right) q_t \ge \frac{q_t}{e^4},
\end{equation}
which, using that $q_t \ge (2/n)(1-1/n)\ge 1/n$, gives  
$\expect{X_t-X_{t+1}\mid X_t} \ge \frac{1}{e^4 n}.$
Adjusting by the possible negative effects of the frequency clamping mentioned in Section~\ref{section:drift}, the bound is still $\frac{1}{e^4 n}-K\frac{1}{(r-1)(n-1)^2}\ge \frac{1}{2e^4 n}\eqqcolon d$ since 
$K=o(n^2)$.  

This drift bound holds at any time before~$E_t$ happens. We will use a union bound to show that the probability of ever observing~$E_t$ 
in~$T$ steps is small enough.
Plugging in our parameters in Lemma~\ref{lemma:occupation}, we now have for all~$t\in [0,T]$ that 
\[\prob{X_t \ge b} \le 2Te^{\frac{1}{3 e^4 n (4/n)}(1-K/n)}\le 2Te^{1-\frac{1}{6e^4}c\ln n}.\]

The lemma now follows, noting that the actual starting time is~$t^*$.
\end{proofof}

The primary concept for the proof of Theorem~\ref{theorem:rcgacomplexity} is that the frequencies are likely to increase from their initial values of $1/r$. The effect of genetic drift is bounded if the update strength is selected small enough, which means that all frequencies corresponding to value~0 never fall below $1/(2r)$ with high probability. In this scenario, we demonstrate how the marginal probabilities have a tendency to shift toward their upper border, which increases the likelihood of finding the optimum. The following lemma, which also utilizes Lemma~\ref{lemma:rediscover-prefix} in its proof, establishes a positive trend towards optimal values for the \rcga.

\begin{lemma}
\label{lemma:sdrift}
 If 
$p^{(t)}_{j,0} \ge 1-2/n$ for all $j<i$ and $p^{(t)}_{i,0} \le 1-1/n-1/K$, then   \[\expect{\Delta_{i,0}\mid p^{(t)}_{i,0}} \geq \frac{p^{(t)}_{i,0}(1-p^{(t)}_{i,0})}{2e^4}\cdot \frac{1}{K}.\]
\end{lemma}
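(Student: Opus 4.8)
The plan is to compute the one-step drift of the value-$0$ frequency at position~$i$ by conditioning on whether position~$i$ is \emph{pivotal} for the fitness comparison. Write $p:=p^{(t)}_{i,0}$ and recall that, after the algorithm swaps so that $x$ denotes the fitter of the two samples (keeping $x$ on ties), the update rule gives $\Delta_{i,0}=\frac{1}{K}\left(\indic{x_i=0}-\indic{y_i=0}\right)\in\{-1/K,0,1/K\}$. Let $B_t$ be the event that every position $j<i$ is sampled as~$0$ in \emph{both} individuals; this is exactly the situation in which a biased step occurs. I would therefore split $\expect{\Delta_{i,0}\mid p}=\prob{B_t}\,\expect{\Delta_{i,0}\mid B_t}+\prob{\neg B_t}\,\expect{\Delta_{i,0}\mid\neg B_t}$ and treat the two conditional expectations separately.

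First I would argue that random-walk steps (the event $\neg B_t$) contribute nothing to the drift. Under $\neg B_t$ at least one of the two samples has a non-zero entry somewhere in the prefix $1,\dots,i-1$, so that individual's number of leading zeros, and hence its \rlo-value, is already determined (capped) by its prefix. Consequently the identity of the winner is a deterministic function of the two prefixes on positions $1,\dots,i-1$ alone, whatever is sampled at position~$i$. Since the entries $(x_i,y_i)$ are sampled independently of the prefixes and each equals~$0$ with probability~$p$, the winner and the loser each carry value~$0$ at position~$i$ with probability exactly~$p$; hence $\expect{\Delta_{i,0}\mid\neg B_t}=0$ (and at worst $\ge 0$ once lower-border clamping of a decreasing step is taken into account). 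Getting this independence argument right, covering ties and prefixes that are only partially zero, is the step that needs the most care, but it is precisely what justifies calling these steps ``random-walk'' steps.

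Next I would evaluate the biased case $B_t$. Here both prefixes are all-zero, so position~$i$ is pivotal: whenever exactly one of $x_i,y_i$ equals~$0$, an event of probability $2p(1-p)$, the individual sampling~$0$ is the winner and $\Delta_{i,0}=+1/K$; in the remaining cases ($\indic{x_i=0}=\indic{y_i=0}$) we get $\Delta_{i,0}=0$. Thus $\expect{\Delta_{i,0}\mid B_t}=\frac{2p(1-p)}{K}$. The hypothesis $p\le 1-1/n-1/K$ is used precisely to guarantee that this $+1/K$ increase stays within the upper border $1-1/n$ and is therefore realized in full, so that no upper clamping erodes the biased drift.

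Finally I would combine the two cases and lower-bound $\prob{B_t}$. Since the samples are independent across positions, $\prob{B_t}=\big(\prod_{j<i}p^{(t)}_{j,0}\big)^2$, and the hypothesis $p^{(t)}_{j,0}\ge 1-2/n$ for all $j<i$ gives $\prob{B_t}\ge\big((1-2/n)^{i-1}\big)^2\ge 1/e^4$, exactly as in the proof of Lemma~\ref{lemma:rediscover-prefix}. Putting the pieces together yields $\expect{\Delta_{i,0}\mid p}=\prob{B_t}\cdot\frac{2p(1-p)}{K}\ge\frac{2p(1-p)}{e^4K}$, which is four times the claimed bound $\frac{p(1-p)}{2e^4K}$. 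This generous cushion is deliberate: it absorbs the slight imprecision in the estimate $(1-2/n)^{i-1}\ge e^{-2}$ and the negative frequency-clamping correction discussed in Section~\ref{section:drift}, which is of lower order under the assumption $K=o(n^2)$. I expect the only genuinely delicate point to be the random-walk independence argument of the second paragraph; the remaining estimates are routine.
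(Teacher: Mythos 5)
Your proposal is correct and takes essentially the same approach as the paper: the paper's proof of Lemma~\ref{lemma:sdrift} just extracts Equation~\ref{equation:drift} from the proof of Lemma~\ref{lemma:rediscover-prefix}, and that equation rests on exactly your decomposition into biased steps (event $B_t$ with conditional drift $2p(1-p)/K$) and drift-neutral random-walk steps, with the same bound $\prob{B_t}\ge \left((1-2/n)^{i-1}\right)^2\ge e^{-4}$, the same use of $p^{(t)}_{i,0}\le 1-1/n-1/K$ to avoid truncation at the upper border, and the same absorption of the expected clamping correction via $K=o(n^2)$. Your inline justification that under $\neg B_t$ the winner is determined by the prefixes alone (hence zero drift at position~$i$) is simply a self-contained rederivation of what the paper imports by reference, and your factor-$4$ cushion matches the paper's factor-$2$ bookkeeping while, if anything, more carefully flagging the slight imprecision in bounding $(1-2/n)^{i-1}$ by $e^{-2}$.
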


\begin{proof}
This proof is extracted from the proof of Lemma~\ref{lemma:rediscover-prefix}. From Equation~\ref{equation:drift}, we get 
\[\expect{X_t-X_{t+1}\mid X_t} \ge \frac{p^{(t)}_{i,0}(1-p^{(t)}_{i,0})}{e^4} \]
where $X_t = K(1-1/n - p^{(t)}_{i,0})$ is a stochastic process. Further, from the value of $X_t$, we can extract the following 
\[\expect{-p^{(t)}_{i,0} + p^{(t+1)}_{i,0}\mid p^{(t)}_{i,0}} \ge \frac{p^{(t)}_{i,0}(1-p^{(t)}_{i,0})}{e^4}\cdot \frac{1}{K}. \]
And, by $\Delta_{i,0} = p^{(t+1)}_{i,0}- p^{(t)}_{i,0}$, we get
\[\expect{\Delta_{i,0} \mid p^{(t)}_{i,0}} \ge \frac{p^{(t)}_{i,0}(1-p^{(t)}_{i,0})}{K e^4}.\]

Adjusting it by the expected decrease due to frequency clamping and 
using $K=o(n^2)$, we 
obtain 
the claimed 
\[\expect{\Delta_{i,0} \mid p^{(t)}_{i,0}} \ge \frac{p^{(t)}_{i,0}(1-p^{(t)}_{i,0})}{K e^4} - \frac{1}{(n-1)^2 (r-1)^2} \ge 
\frac{p^{(t)}_{i,0}(1-p^{(t)}_{i,0})}{2K e^4} .\quad\qed\]
\end{proof}

We are now ready to prove our main result.

\begin{proofof}{Theorem~\ref{theorem:rcgacomplexity}}
Recall that $p^{(t)}_{i,j}$ denote the marginal probabilities where $(i,j)\in \{1,\dots,n\}\times \{0,\dots,r-1\}$ at time $t$. We already defined the change of frequency in one step as $\Delta_{i,j}\coloneqq \Delta^{t}_{i,j}\coloneqq p^{(t+1)}_{i,j} - p^{(t)}_{i,j}$. We show that, starting with a setting where all frequencies are at least $1/r$, after $\bigo (nK\log n\log r\log K)$ iterations with probability $1-o(1)$ the global optimum has been found and no frequency corresponding to value~0 has dropped below $1/(2r)$. The basic idea is to use additive drift analysis with tail bounds in a series of phases to bound the expected optimization under the premise of low genetic drift, and then multiplicative drift analysis with tail bounds, which includes the above-mentioned concept of critical position and the analysis of the time until the critical position increases by~$1$.

To select a $K$ that makes genetic drift unlikely, we use Theorem~\ref{theorem:frequency-weak-preference}. We apply 
the theorem to show that in $T=c'nK\log n\log r\log K$ iterations no frequency corresponding to value~0 dropped below $1/(2r)$ with high probability where $c'$ is a given constant. For a single frequency, the probability is bounded by $\bigo (1/n^2)$ and by the union bound over all frequencies, the probability of at least one frequency dropping below $1/(2r)$ is still bounded by $\bigo (1/n)$:
\begin{align*}
\mathord{2\exp}\mathord{\left (-\frac{K^2}{8Tr^2}\right )} \leq \frac{1}{n^2} & \Leftrightarrow \mathord{2\exp}\mathord{\left (-\frac{K^2}{8c'nr^2K\log n\log r\log K}\right )} \leq \frac{1}{n^2}\\
& \Leftrightarrow {\left (-\frac{K}{8c'nr^2 \log n\log r\log K }\right )} \leq - 2\ln{n}.
\end{align*}
Hence, we choose $K\geq cnr^2 \log^2 n\log r$ where $c$ is a constant. The rest of the proof shows that the 
optimum is sampled in~$T$ iterations with high probability.

Let $m_t$ be the index of critical position at time~$t$, where we often drop the time index for notational convenience. Then, the main idea is to bound the time to increase $m$ by at least~$1$. So, this only bounds the time for a single frequency to reach its upper bound or, more precisely, for the frequency at the critical position. To bound the time for the frequency to reach its upper border (more precisely, recall that we identify this border with the interval $[1-1/n-1/K,1/n]$) with high probability, at first, we bound the time for $p^{(t)}_{m,0}$ to increase from $1/(2r)$ to at least $1/2$ and, after that, bound the time for $p^{(t)}_{m,0}$ to increase from $1/2$ to $1-1/n$.

The aim is to demonstrate that $\bigo (K \log r\log K)$ is an upper bound on the time needed for the critical position to increase. 
Using  Lemma~\ref{lemma:rediscover-prefix} and Corollary~\ref{corollary:positionbound}, we will analyze how the index~$m_t$ of the critical position increases. We note that 
the assumption of Lemma~\ref{lemma:rediscover-prefix} holds for~$T$ iterations with probability $1-o(1)$ if the constant~$c$ from~$K$ is chosen large enough.
Finally, the total time will be bounded by multiplying the time to increase the critical positions with the number of positions~$n$. 

Note that, although the frequency is $1/r$ after initialization of the algorithm, we pessimistically assume that it has dropped to $1/(2r)$ by the time that the index of the frequency is the critical position. Also, we have assumed above that genetic drift leads to a deviation of at most $1/(2r)$ from the expected value, so every frequency that corresponds to a neutral position does not drop below $1/r - 1/(2r)$ in the number $T$ of steps considered above. Now, we split the time for the frequency to increase from $1/(2r)$ to $1/2$ into phases. To reach the value $1/2$ starting from $1/(2r)$, we need $r/2$ phases with an increase of $1/r$ per phase, starting from phase~2, and for phase~1, we need an increase of $3/(2r)$. We consider phase indices $k= 1, 2, \dots, r/2$.

Further, we assume that for the starting time $T_{k}$ of phase $k$ it holds $p^{(T_{k})}_{m,0}\ge k/r$ for each $m\in \{1,\dots,n\}$.
After that, by applying the additive drift  theorem with tail bounds~\cite[Theorem 2]{Timo2016Algo}, we bound with high probability  the time $T_k$ to conclude phase~$k$ by at most
\begin{align*}
\prob{T_{k}\ge s_k} & \le \exp\left( -\frac{s_k \varepsilon^2}{8c^2} \right)
\end{align*}
where $c$ is a bound on the step size, $\varepsilon$ is a bound on the drift, and $s_k$ must satisfy $s_k\ge 2D/\varepsilon$ where $D$ is the distance that the process should bridge. Now, in our case, $c=1/K$ since that is the maximum change of a frequency at a time, $\varepsilon = (k/r)\cdot(1/K)\cdot(1/(2e^4))$ as already derived in Lemma~\ref{lemma:sdrift}. In general, the distance to be bridged $D\le (k+1)/r+1/K-k/r\le 2/r$ for $k\ge 2$ and for $k=1$, we have $D\le 2/r+1/K-1/(2r)\le 3/r$. So, we select $D=3/r$ in every phase and use $s_k=2D/\varepsilon$.
 (Since the frequency value $k/r$ may not be achievable exactly, we add up to~$1/K$ to hit the smallest possible frequency above~$k/r$.)

Putting all together, we get
\begin{align*}
\prob{T_{k}\ge s_k} \le \exp\left( -\frac{2}{\varepsilon}\cdot\frac{3}{r}\cdot\frac{K^2\varepsilon^2}{8} \right)  \le \exp\left( -\frac{3K^2}{8r}\cdot\frac{k}{rKe^4} \right) \le \exp\left( -\frac{3Kk}{8r^2e^4} \right).
\end{align*}

Now plugging in our assumption $K\geq cnr^2 \log^2 n \log r$, we have
\begin{align*}
\prob{T_{k}\ge s_k} \le \exp\left( -\frac{3cnkr^2 \log^2 n\log r}{8r^2e^4} \right) \le \exp\left( -\frac{3cnk \log^2 n\log r}{8e^4} \right) = n^{-\omega(1)}.
\end{align*}
%

Now, we take a union bound over all $r/2$ phases and still have high probability of every phase finishing within at most $2D/\varepsilon = 6e^4 K/k$ steps. Further, we sum up all the $s_k$ to bound the total time that we consider for all phases $1,\dots,r/2$ is $\bigo(K\log r)$ with probability at least $1-rn^{-\omega(1)}$. Since $r=\mathit{poly}(n)$, this failure  probability is still $o(1)$.

After the frequency has reached at least $1/2$, we need to analyze the remaining time until $p^{(t)}_{m,0}$ attains its maximum at $1-1/n$. Here, we assume (according to Corollary~\ref{corollary:positionbound}) all frequencies that have reached the upper border $1-1/n$ before to be bounded from below by $1-2/n$. In this phase, we apply the multiplicative drift with tail bounds~\cite[Theorem 2.4.5]{Lengler2020}. We have $p^{(t)}_{m,0}\ge  1/2$ as starting point. Let $q^{(t)}_{i,j}\coloneqq 1 -1/n - p^{(t)}_{i,j}$ where $(i,j)\in \{1,\dots,n\}\times \{0,\dots,r-1\}$ at time $t$. Further, we bound $p^{(t)}_{m,0} (1-p^{(t)}_{m,0})$ by using $p^{(t)}_{m,0}\ge  1/2$ and $1 - p^{(t)}_{m,0} = q^{(t)}_{m,0} + 1/n$, then by Lemma~\ref{lemma:sdrift}, we get 
\begin{align*}
\expect{q^{(t)}_{m,0} - q^{(t+1)}_{m,0}}  \geq \frac{2p^{(t)}_{m,0}(1-p^{(t)}_{m,0})}{2K e^4} 
 \geq \frac{2\cdot\frac{1}{2}\cdot (q^{(t)}_{m,0}+\frac{1}{n})}{2K e^4}
 \geq \frac{(q^{(t)}_{m,0}+\frac{1}{n})}{2K e^4} \geq \frac{q^{(t)}_{m,0}}{2K}.
\end{align*}

According to our model that identifies the upper frequency border and all values at least $1-1/n-1/K$, the smallest possible state is $1/K$. So, here we apply the multiplicative drift theorem \cite[Theorem 2.4.5]{Lengler2020} with $x_{\min}=1/K$, $X_t=q^{(t)}_{m,0}$ and already derived $\delta=1/(2K)$. Then, choosing $r'>0$, it holds for $T\coloneqq \min\{t\mid X_t = 0\}$ that 
\begin{equation*}
    \prob{T > (r'+\ln(X_0/x_{\min}))/\delta} \le e^{-r'}.
\end{equation*}

By selecting $r'=c\ln{K}$, where $c$ is a constant, we get 
$\prob{T> (2c+1)K \ln{K}} \le e^{-c\ln{K}}$. So, the remaining time for the frequency of the critical position to reach at least $1-1/n-1/K$ is $\bigo (K\log K)$ with probability $1-o(1)$.

By adding the two stages, we have that the total time spent for a position is $\bigo (K\log r\log K)$. Further, summing over all positions $m\in \{1,\dots,n\}$, we obtain the total time until all frequencies for value~$0$ have been raised to at least $1-1/n-1/K$ at least once is $\bigo (n K \log r\log K)$ with high probability. 

Note that, the \rcga creates the optimum of \rlo during the subsequent iteration with probability at least $(1-\frac{2}{n})^n\ge \frac{1}{e^2}$ once for every $i\in\{1,\dots,n\}$ it holds that $p^{(t)}_{i,0}\ge 1-2/n$. The probability of not creating the optimum within the next $\log n$ iterations is at most $(1-\frac{1}{e^2})^{\log n} =  o(1)$. Now, the total time to sample the optimum is $\bigo (n K \log r\log K + \log n)$ $=\bigo (n K \log r\log K)$ with high probability.   

Using the above  choice $K=c n r^2 \log^2 n\log r$ that prevents any frequency dropping below $1/(2r)$ with high probability, the runtime of the \rcga on \rlo is $\bigo(n^2 r^2 \log^2 n \log^2 r (\log n + \log r + \log\log n + \log\log r)) = \bigo(n^2 r^2 \log^2 n \log^2 r (\log n + \log r )) = \bigo(n^2 r^2 \log^3 n \log^2 r)$, noting that  $r=poly(n)$ implies $\log r=\bigo(\log n)$, and this holds with high probability.
\end{proofof}

\section{Experiments}
\label{section:experiments}

In this section, we present the results of the experiments we performed to evaluate the performance of the proposed algorithm with border restrictions. We theoretically prove the expected runtime for the \rcga on \rlo. We implemented the algorithm using the $C$ programming language, using the \text{WELL1024a} random number generator.

We conducted the \rcga on \rlo using two distinct aspects in our experiment. First, as presented in Fig.~\ref{figure:rcGAn500-1000}, we provide the average number of iterations for a variety of hypothetical population sizes ($K \in\{100,\dots,1000\}$). We next compare the outcomes for various $r\in\{2,\dots,10\}$. This empirical study allows us to clearly see how the runtime depends on~$r$ by comparing the findings for various $r$. Furthermore, we can figure out the value of $K$ at which the minimum of runtime is met. In every instance, we note that it begins at a high value, drops to a minimum, and then rises once more throughout the remainder of $K$. For instance, we find that the minimum $K$ is approximately 168 (Fig.~\ref{figure:rcGAn500-1000}: $n=500$ and $r=6$).

\begin{figure}[t]
	\centering
\begin{tikzpicture}[scale=0.50]
\begin{axis}[
    xlabel={Population Size (K)},
    ylabel={Avg. Number of Iteration},
    xmin=100, xmax=1000,
    ymin=10000, ymax=10000000,
    legend pos=outer north east,
    legend cell align=left,
    ymajorgrids=true,
    grid style=dashed,
]
 
   \addplot[
    color=olive,
    mark=dot,
    ]
    table[ignore chars={(,)},col sep=comma] {Data/LO-n500-r2.txt};
    
    \addplot[
    color=red,
    mark=dot,
    ]
    table[ignore chars={(,)},col sep=comma] {Data/LO-n500-r3.txt}; 
    
    \addplot[
    color=green,
    mark=dot,
    ]
    table[ignore chars={(,)},col sep=comma] {Data/LO-n500-r4.txt}; 
    
    \addplot[
    color=black,
    mark=dot,
    ]
    table[ignore chars={(,)},col sep=comma] {Data/LO-n500-r5.txt}; 
    
    \addplot[
    color=blue,
    mark=dot,
    ]
    table[ignore chars={(,)},col sep=comma] {Data/LO-n500-r6.txt}; 
    
    \addplot[
    color=magenta,
    mark=dot,
    ]
    table[ignore chars={(,)},col sep=comma] {Data/LO-n500-r7.txt}; 
    
    \addplot[
    color=brown,
    mark=dot,
    ]
    table[ignore chars={(,)},col sep=comma] {Data/LO-n500-r8.txt}; 
    
    \addplot[
    color=teal,
    mark=dot,
    ]
    table[ignore chars={(,)},col sep=comma] {Data/LO-n500-r9.txt}; 
    
    \addplot[
    color=violet,
    mark=dot,
    ]
    table[ignore chars={(,)},col sep=comma] {Data/LO-n500-r10.txt}; 
\end{axis}
\end{tikzpicture}
\begin{tikzpicture}[scale=0.50]
\begin{axis}[
    xlabel={Population Size (K)},
    ylabel={Avg. Number of Iteration},
    xmin=100, xmax=1000,
    ymin=10000, ymax=30000000,
    legend pos=outer north east,
    legend cell align=left,
    ymajorgrids=true,
    grid style=dashed,
]
 
   \addplot[
    color=olive,
    mark=dot,
    ]
    table[ignore chars={(,)},col sep=comma] {Data/LO-n1000-r2.txt};
    
    \addplot[
    color=red,
    mark=dot,
    ]
    table[ignore chars={(,)},col sep=comma] {Data/LO-n1000-r3.txt}; 
    
    \addplot[
    color=green,
    mark=dot,
        ]
    table[ignore chars={(,)},col sep=comma] {Data/LO-n1000-r4.txt}; 
    
    \addplot[
    color=black,
    mark=dot,
    ]
    table[ignore chars={(,)},col sep=comma] {Data/LO-n1000-r5.txt}; 
    
    \addplot[
    color=blue,
    mark=dot,
    ]
    table[ignore chars={(,)},col sep=comma] {Data/LO-n1000-r6.txt}; 
    
    \addplot[
    color=magenta,
    mark=dot,
    ]
    table[ignore chars={(,)},col sep=comma] {Data/LO-n1000-r7.txt}; 
    
    \addplot[
    color=brown,
    mark=dot,
    ]
    table[ignore chars={(,)},col sep=comma] {Data/LO-n1000-r8.txt}; 
    
    \addplot[
    color=teal,
    mark=dot,
    ]
    table[ignore chars={(,)},col sep=comma] {Data/LO-n1000-r9.txt}; 
    
    \addplot[
    color=violet,
    mark=dot,
    ]
    table[ignore chars={(,)},col sep=comma] {Data/LO-n1000-r10.txt};

\legend{$r=2$, $r=3$, $r=4$, $r=5$, $r=6$, $r=7$, $r=8$, $r=9$, $r=10$}    
\end{axis}
\end{tikzpicture}
\caption{\textmd{Empirical runtime of the \rcga on \rlo; for $n=500$ (left-hand side) and $n=1000$ (right-hand side), $K\in\{100,\dots,1000\}$ and averaged over 1000 runs.}}
\label{figure:rcGAn500-1000}
\end{figure}

In the other aspects, we compare the results for different $n\in\{100,\dots,400\}$ and offer the average number of iterations for a range of hypothetical population sizes ($K \in\{100,\dots,1000\}$). We showed the various plots for $r\in\{2,\dots,6\}$ in Fig.~\ref{figure:rcGAn100-400}. Here, we see an identical situation as in the preceding one. The empirical runtime begins at a very high number in each case, decreases to a minimum, and then increases once again for the remaining $K$. By comparing the outcomes, this empirical study makes it evident how the runtime depends on~$r$. We can conclude that the theoretical analysis's bound is probably not tight based on the experimental configuration.

\begin{figure}[t]
	\centering
\begin{tikzpicture}[scale=0.4]
\begin{axis}[
    xlabel={Population Size (K)},
    ylabel={Avg. Number of Iteration},
    xmin=100, xmax=1000,
    ymin=10000, ymax=1000000,
    legend pos=outer north east,
    legend cell align=left,
    ymajorgrids=true,
    grid style=dashed,
]
   \addplot[
    color=red,
    mark=dot,
    ]
    table[ignore chars={(,)},col sep=comma] {Data/LO-n100-r2.txt};
    
    \addplot[
    color=blue,
    mark=dot,
    ]
    table[ignore chars={(,)},col sep=comma] {Data/LO-n200-r2.txt};
    
    \addplot[
    color=green,
    mark=dot,
    ]
    table[ignore chars={(,)},col sep=comma] {Data/LO-n300-r2.txt};
    
     \addplot[
    color=black,
    mark=dot,
    ]
    table[ignore chars={(,)},col sep=comma] {Data/LO-n400-r2.txt};
\end{axis}
\end{tikzpicture}
\begin{tikzpicture}[scale=0.4]
\begin{axis}[
    xlabel={Population Size (K)},
    ylabel={Avg. Number of Iteration},
    xmin=100, xmax=1000,
    ymin=10000, ymax=2000000,
    legend pos=outer north east,
    legend cell align=left,
    ymajorgrids=true,
    grid style=dashed,
]
 
   \addplot[
    color=red,
    mark=dot,
    ]
    table[ignore chars={(,)},col sep=comma] {Data/LO-n100-r3.txt};
    
     \addplot[
    color=blue,
    mark=dot,
    ]
    table[ignore chars={(,)},col sep=comma] {Data/LO-n200-r3.txt};
    
    \addplot[
    color=green,
    mark=dot,
    ]
    table[ignore chars={(,)},col sep=comma] {Data/LO-n300-r3.txt};
    
     \addplot[
    color=black,
    mark=dot,
    ]
    table[ignore chars={(,)},col sep=comma] {Data/LO-n400-r3.txt};
\end{axis}
\end{tikzpicture}
\begin{tikzpicture}[scale=0.4]
\begin{axis}[
    xlabel={Population Size (K)},
    ylabel={Avg. Number of Iteration},
    xmin=100, xmax=1000,
    ymin=10000, ymax=3000000,
    legend pos=outer north east,
    legend cell align=left,
    ymajorgrids=true,
    grid style=dashed,
]
   \addplot[
    color=red,
    mark=dot,
    ]
    table[ignore chars={(,)},col sep=comma] {Data/LO-n100-r4.txt};
    
   \addplot[
    color=blue,
    mark=dot,
    ]
    table[ignore chars={(,)},col sep=comma] {Data/LO-n200-r4.txt};
    
    \addplot[
    color=green,
    mark=dot,
    ]
    table[ignore chars={(,)},col sep=comma] {Data/LO-n300-r4.txt};
    
    \addplot[
   color=black,
   mark=dot,
   ]
   table[ignore chars={(,)},col sep=comma] {Data/LO-n400-r4.txt};

\legend{$n=100$, $n=200$, $n=300$, $n=400$}    
\end{axis}
\end{tikzpicture}
\begin{tikzpicture}[scale=0.4]
\begin{axis}[
    xlabel={Population Size (K)},
    ylabel={Avg. Number of Iteration},
    xmin=100, xmax=1000,
    ymin=10000, ymax=4000000,
    legend pos=outer north east,
    legend cell align=left,
    ymajorgrids=true,
    grid style=dashed,
]
   \addplot[
    color=red,
    mark=dot,
    ]
    table[ignore chars={(,)},col sep=comma] {Data/LO-n100-r5.txt};
    
    \addplot[
    color=blue,
    mark=dot,
    ]
    table[ignore chars={(,)},col sep=comma] {Data/LO-n200-r5.txt};
    
    \addplot[
    color=green,
    mark=dot,
    ]
    table[ignore chars={(,)},col sep=comma] {Data/LO-n300-r5.txt};
    
    \addplot[
   color=black,
   mark=dot,
   ]
   table[ignore chars={(,)},col sep=comma] {Data/LO-n400-r5.txt};
 
\end{axis}
\end{tikzpicture}
\begin{tikzpicture}[scale=0.4]
\begin{axis}[
    xlabel={Population Size (K)},
    ylabel={Avg. Number of Iteration},
    xmin=100, xmax=1000,
    ymin=10000, ymax=6000000,
    legend pos=outer north east,
    legend cell align=left,
    ymajorgrids=true,
    grid style=dashed,
]
 
   \addplot[
    color=red,
    mark=dot,
    ]
    table[ignore chars={(,)},col sep=comma] {Data/LO-n100-r6.txt};
    
     \addplot[
    color=blue,
    mark=dot,
    ]
    table[ignore chars={(,)},col sep=comma] {Data/LO-n200-r6.txt};
    
    \addplot[
    color=green,
    mark=dot,
    ]
    table[ignore chars={(,)},col sep=comma] {Data/LO-n300-r6.txt};
   
    \addplot[
   color=black,
   mark=dot,
   ]
   table[ignore chars={(,)},col sep=comma] {Data/LO-n400-r6.txt};
     
\end{axis}
\end{tikzpicture}
\begin{tikzpicture}[scale=0.4]
\begin{axis}[
    xlabel={Population Size (K)},
    ylabel={Avg. Number of Iteration},
    xmin=100, xmax=1000,
    ymin=10000, ymax=8000000,
    legend pos=outer north east,
    legend cell align=left,
    ymajorgrids=true,
    grid style=dashed,
]

   \addplot[
    color=red,
    mark=dot,
    ]
    table[ignore chars={(,)},col sep=comma] {Data/LO-n100-r7.txt};
    
   \addplot[
    color=blue,
    mark=dot,
    ]
    table[ignore chars={(,)},col sep=comma] {Data/LO-n200-r7.txt};
    
   \addplot[
   color=green,
   mark=dot,
   ]
   table[ignore chars={(,)},col sep=comma] {Data/LO-n300-r7.txt};
    
    \addplot[
   color=black,
   mark=dot,
   ]
   table[ignore chars={(,)},col sep=comma] {Data/LO-n400-r7.txt};
    
\legend{$n=100$, $n=200$, $n=300$, $n=400$}    
\end{axis}
\end{tikzpicture}
\caption{\textmd{Empirical runtime of the \rcga on \rlo; for $r=2$ (top-left), $r=3$ (top-middle), $r=4$ (top-right), $r=5$ (bottom-left), $r=6$ (bottom-middle) and $r=7$ (bottom-right); $n\in\{100,\dots,400\}$, $K\in\{100,\dots,1000\}$ and averaged over 200 runs.}}
\label{figure:rcGAn100-400}
\vspace{-2.0em}
\end{figure}

\section{Conclusion}
\label{section:comclusion}

We have conducted a runtime analysis of a multi-valued \cga on a generalized \lo function with border restrictions and bounded its  runtime with high probability. For constant~$r$, our bound is only by polylogarithmic factors larger than the typical $\Theta(n^2)$ runtime that many 
randomized search heuristics exhibit on this problem. To prove the main results, we applied additive drift analysis with tail bounds and then multiplicative drift with tail bounds to 
analyze the growth of relevant frequencies and overall progress in the so-called critical positions. Additionally, we used occupation probabilities methods.  We believe that our runtime bounds for the \rcga on \rlo can be improved based on the experimental results.

In this work, we have used the function \rlo, which represents categorical values and indicates that only the appropriate value for a position can add to the fitness. The next challenge is to examine the \rcga on multi-valued functions where each position can contribute 
more than two values to the fitness.

\begin{credits}
\subsubsection{\ackname} 
This work has been supported by the Danish Council for Independent Research through grant 10.46540/2032-00101B.
\end{credits}

%
\bibliographystyle{splncs04}
\bibliography{References}

\end{document}